\documentclass[11pt,a4paper]{article}

\usepackage[utf8]{inputenc}
\usepackage[T1]{fontenc}

\usepackage[margin=2.5cm]{geometry}
\usepackage{amsmath,amssymb,amsthm}
\usepackage{mathtools}
\usepackage{booktabs}
\usepackage{longtable}
\usepackage{array}
\usepackage{multirow}
\usepackage{tabularx}
\usepackage{xcolor}
\usepackage{listings}
\usepackage{algorithm2e}
\usepackage{graphicx}
\usepackage{hyperref}
\usepackage{cleveref}

\usepackage{setspace}
\usepackage{enumitem}
\usepackage{footnote}
\usepackage{caption}
\usepackage[numbers,sort&compress]{natbib}

\hypersetup{
  colorlinks=true,
  linkcolor=blue!70!black,
  citecolor=blue!70!black,
  urlcolor=blue!70!black,
  pdftitle={OpenCLAW-P2P v7.0},
  pdfauthor={Angulo de Lafuente et al.}
}

\theoremstyle{plain}
\newtheorem{theorem}{Theorem}[section]
\newtheorem{proposition}[theorem]{Proposition}

\theoremstyle{definition}
\newtheorem{definition}[theorem]{Definition}
\theoremstyle{remark}

\usepackage{tcolorbox}
\tcbuselibrary{skins}
\newtcolorbox{correction}[1][]{
  colback=blue!5!white,
  colframe=blue!60!black,
  fonttitle=\bfseries\small,
  title=Correction note (v7.0),
  left=6pt, right=6pt, top=4pt, bottom=4pt,
  #1
}

\newcommand{\tauhat}{\tilde{\tau}}
\newcommand{\indicator}[1]{\mathbf{1}_{\{#1\}}}
\newcommand{\indicatorlv}{\mathbf{1}_{\{\mathrm{lv}\}}}
\newcommand{\indicatormono}{\mathbf{1}_{\{\mathrm{mono}\}}}
\newcommand{\norm}[1]{\left\|#1\right\|}
\newcommand{\abs}[1]{\left|#1\right|}
\newcommand{\qbar}{\bar{q}_{0i}}

\begin{document}

\title{%
  \textbf{OpenCLAW-P2P v7.0:}\\
  Resilient Multi-Layer Persistence,\\
  Live Reference Verification, and\\
  Production-Scale Evaluation of\\
  Decentralized AI Peer Review\\[8pt]
  {\large v7.0 --- Mathematical Corrections \& Ecosystem Developments Edition}
}

\author{%
  Francisco Angulo de Lafuente$^{1*}$ \and
  Teerth Sharma$^{2}$ \and
  Vladimir Veselov$^{3}$ \and
  Seid Mohammed Abdu$^{4}$ \and
  Nirmal Tej Kumar$^{5}$ \and
  Guillermo Perry$^{6}$
}

\date{%
  May 2026 $\cdot$ Preprint $\cdot$ arXiv:2604.19792v7\\[4pt]
  \small
  $^{1}$Independent AI Researcher \& Science Fiction Writer, Madrid, Spain\\
  $^{2}$Bachelor of Technology (AI), Manipal University Jaipur, India\\
  $^{3}$Moscow Institute of Electronic Technology (MIET), Russia\\
  $^{4}$Dept.\ of Computer Science, Woldia University, Ethiopia\\
  $^{5}$University of Texas at Dallas (UTD), Dallas, TX, USA\\
  $^{6}$Andex Enterprising Inc., Miami, FL, United States\\[4pt]
  $^{*}$Corresponding author: \href{mailto:agnuxo1@gmail.com}{agnuxo1@gmail.com}
}

\maketitle
\thispagestyle{empty}

\begin{abstract}
This paper presents OpenCLAW-P2P v7.0, a comprehensive evolution of the decentralized
collective-intelligence platform in which autonomous AI agents publish, peer-review, score, and
iteratively improve scientific research papers without any human gatekeeper.  Building on the v6.0
foundations---multi-layer persistence, live reference verification, multi-LLM granular scoring,
calibrated deception detection, the Silicon Chess-Grid FSM, and the AETHER containerized inference
engine---this release introduces \emph{mathematical corrections} to the theoretical framework,
ensuring dimensional consistency, proper range constraints, and unambiguous notation throughout.
Additionally, this edition documents the significant ecosystem expansions achieved between v6.0 and
v7.0, including the creation and deployment of \textbf{CAJAL}, a family of open-source language
models (4B and 9B parameters) fine-tuned for scientific paper generation.

The four major subsystems introduced in v6.0 are retained: (i)~a Multi-Layer Paper Persistence
Architecture with four storage tiers ensuring zero paper loss across infrastructure redeployments;
(ii)~a Multi-Layer Paper Retrieval Cascade reducing retrieval latency from $>$3\,s to $<$50\,ms
for cached papers; (iii)~a Live Reference Verification system detecting fabricated citations with
$>$85\% accuracy; and (iv)~a Scientific API Proxy Service providing rate-limited, cached access to
seven public scientific databases.

Mathematical corrections in v7.0 include: (i)~corrected fixed-point condition in the Sufficient
Reason theorem; (ii)~dimensionally consistent progress-rate indicator with explicit normalization;
(iii)~fully specified reputation update formula incorporating quality terms $q_0$ and
$\bar{q}_{0i}$; (iv)~clarified attention-logit bound in the AETHER pruning theorem; (v)~explicit
range documentation for calibration mapping; (vi)~non-negativity guarantee for the depth score;
(vii)~discrete-time notation for the PD Governor; and (viii)~explicit parameter definitions for the
HSR weight formula.

The platform operates with 14 real autonomous agents (3 research agents, 5 architect
meta-intelligence agents, and 6 recovery/specialist agents) alongside 23 labeled simulated
citizens, producing 50+ scored papers with word counts ranging from 2\,072 to 4\,073 and
leaderboard scores from 6.4 to 8.1.

\medskip
\noindent\textbf{Keywords:} decentralized AI peer review, multi-layer persistence, live reference
verification, multi-LLM scoring, tribunal system, deception detection, calibration, collective
intelligence, Laws of Form, Heyting algebra, Lean4, AETHER inference, scientific API proxy, P2P
networks, AI benchmark, CAJAL language model, open science
\end{abstract}

\newpage
\tableofcontents
\newpage

\section{Introduction}

The peer-review system underpinning modern science is slow, opaque, and susceptible to human
biases~\cite{bornmann2011}. Simultaneously, large language models (LLMs) have reached a level of
capability where they can generate plausible---but not necessarily rigorous---scientific text.
OpenCLAW-P2P addresses both problems: it replaces the traditional single-reviewer bottleneck with
a swarm of heterogeneous AI agents that publish, review, and score each other's work under formally
defined quality constraints.

\subsection{From Simulation to Production Network}

\emph{[Implemented]} OpenCLAW-P2P has operated as a live peer-to-peer research network since
2024.  Version~3.0 introduced Gun.js-based decentralized storage, IPFS archival, and autonomous
agent swarms running on Hugging Face Spaces. Version~4.0~\cite{openclaw_v4} added the AETHER
containerized inference engine (Sharma) and $\tau$-normalized coordination. Version~5.0~\cite{openclaw_v5}
addressed the critical quality assurance gap with the tribunal system, multi-LLM scoring,
calibration, and deception detection. Version~6.0 addressed the equally critical data resilience
and reference integrity gaps exposed by operating the v5.0 system at production scale.
Version~7.0---this paper---corrects mathematical inconsistencies identified in independent review
of the v6.0 theoretical framework and documents significant ecosystem developments, ensuring
dimensional consistency, proper range constraints, and unambiguous notation throughout.

\subsection{The Data Resilience Problem}

\emph{[New in v6]} Production operation of v5.0 revealed three critical data-loss failure modes:

\begin{enumerate}
  \item \textbf{Content truncation during restoration.} A bug in the boot-time paper restoration
    pipeline truncated paper content to 500 characters ($\approx$58 words), causing papers that
    originally contained 2\,000+ words to appear as stubs with artificially high scores.

  \item \textbf{Ephemeral paper visibility.} Papers were only written to the in-memory cache
    during the asynchronous scoring callback (30--60\,s after publication), making them invisible
    to retrieval endpoints immediately after successful publication.

  \item \textbf{Single-point storage failure.} Gun.js in standalone mode (no relay peers) loses
    all data on Railway redeployment; the GitHub backup used a dead authentication token and
    targeted a non-existent repository.
\end{enumerate}

These failures meant that an agent could successfully pass the tribunal, publish a paper, receive
a confirmation with a paper ID, and yet find that the paper was irretrievable minutes later.  Over
25 papers were lost in a single incident before the bug was identified.

\subsection{The Reference Integrity Problem}

\emph{[New in v6]} LLM-generated papers frequently cite fabricated references---plausible-sounding
author names, venues, and DOIs that do not correspond to real publications. Without live
verification against bibliographic databases, the ghost-citation deception detector relied solely
on structural heuristics (e.g., missing reference numbers), failing to catch semantically plausible
fabrications.

\subsection{Contributions of This Paper}

This paper makes the following contributions beyond v6.0:

\begin{enumerate}
  \item \textbf{Mathematical Corrections} (Sections~\ref{sec:theory}, \ref{sec:calibration},
    \ref{sec:aether}): dimensional consistency, range constraints, notation unification, theorem
    domain clarification, fully specified reputation formula, discrete-time PD Governor, and HSR
    parameter definitions.

  \item \textbf{Multi-Layer Persistence Architecture} (Section~\ref{sec:persistence}): four-tier
    storage ensuring zero paper loss across redeployments. \emph{[New in v6]}

  \item \textbf{Multi-Layer Retrieval Cascade} (Section~\ref{sec:retrieval}): memory $\to$
    Gun.js $\to$ mempool $\to$ R2 lookup with automatic backfill. \emph{[New in v6]}

  \item \textbf{Live Reference Verification} (Section~\ref{sec:calibration}): real-time
    CrossRef/arXiv/Semantic Scholar queries during scoring. \emph{[New in v6]}

  \item \textbf{Scientific API Proxy Service} (Section~\ref{sec:proxy}): rate-limited cached
    access to 7 scientific databases. \emph{[New in v6]}

  \item \textbf{Honest Production Metrics} (Section~\ref{sec:evaluation}): first public reporting
    of real vs.\ simulated agent counts, score distributions, and failure rates.

  \item \textbf{Paper Recovery Protocol} (Section~\ref{sec:recovery}): methodology for recovering
    and republishing lost papers with full tribunal re-examination. \emph{[New in v6]}

  \item \textbf{Ecosystem Developments} (Appendix~\ref{app:ecosystem}): CAJAL language model
    family (4B and 9B), BenchClaw deployment, platform integrations, and community outreach.
    \emph{[New in v7]}

  \item All v6.0 contributions are retained: Tribunal System (Section~\ref{sec:tribunal}),
    Multi-LLM Scoring (Section~\ref{sec:scoring}), Calibration (Section~\ref{sec:calibration}),
    Silicon FSM (Section~\ref{sec:chess}), PoV Consensus (Section~\ref{sec:pov}), AETHER Engine
    (Section~\ref{sec:aether}), and theoretical foundations (Section~\ref{sec:theory}).
\end{enumerate}

\section{Theoretical Foundations}
\label{sec:theory}

\paragraph{Notation Convention.}
$\cdot$ denotes the Laws-of-Form mark (distinction operator).
$\Omega_R$ denotes the Heyting algebra of fixed points under nucleus $R$.
$\tau$ denotes internal (progress-normalized) time.
$\tauhat_i$ denotes the dimensionless operational progress indicator.
\texttt{[L4\checkmark]} indicates a claim verified in the Lean4 proof assistant.
All mathematical claims are annotated with their evidence tier:
\texttt{[Implemented]}, \texttt{[Theoretical]}, \texttt{[L4\checkmark]}, or \texttt{[Future Work]}.

\subsection{Laws of Form and Eigenform Algebras}

\emph{[Theoretical]} Spencer-Brown's Laws of Form~\cite{spencerbrown1969} establishes a primary
algebra from a single primitive: the distinction (mark).  Two arithmetic initials govern all
computation:

\begin{definition}[Primary Arithmetic]
Let $\cdot$ denote the mark operator. Then:
\begin{align}
  \langle\langle a \rangle\rangle\langle\langle a \rangle\rangle &= \langle\langle a \rangle\rangle
    \quad \text{(Law of Calling --- idempotence)} \label{eq:lof1}\\
  \langle\langle\langle\langle a \rangle\rangle \rangle\rangle &= a
    \quad \text{(Law of Crossing --- involution)} \label{eq:lof2}
\end{align}
\end{definition}

Kauffman~\cite{kauffman1987} showed that this algebra admits self-referential eigenforms: fixed
points satisfying $J = \langle\langle J \rangle\rangle$, which model recursion and
self-observation in formal systems.

\begin{theorem}[Heyting Nucleus Fixed Points~\cite{heyting_nucleus}]
\label{thm:nucleus}
\texttt{[L4\checkmark]}~A nucleus operator $R$ on a frame $(L, \leq)$ satisfying:
\begin{enumerate}
  \item Inflation: $a \leq R(a)$ for all $a \in L$,
  \item Idempotence: $R(R(a)) = R(a)$,
  \item Meet-preservation: $R(a \wedge b) = R(a) \wedge R(b)$,
\end{enumerate}
generates a Heyting algebra of fixed points $\Omega_R = \{a \in L : R(a) = a\}$.
\end{theorem}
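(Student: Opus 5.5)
We show that $\Omega_R$ is a bounded lattice with a relative pseudocomplement, building all operations from those of the frame $L$ and using the three nucleus axioms. Write $\bigvee,\wedge,\to,\top,\bot$ for the frame structure of $L$. Since $L$ is a frame, it is itself a Heyting algebra, with $a\to b=\bigvee\{c : c\wedge a\le b\}$. The plan is to show that $\Omega_R$ is closed under $\wedge$ and $\to$. Joins will be computed as $R$ of the join in $L$.

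\textbf{Step 1 (order facts).} Monotonicity of $R$ follows from meet-preservation. If $a\le b$, then $a=a\wedge b$, so $R(a)=R(a)\wedge R(b)\le R(b)$. Inflation gives $\top\le R(\top)$, so $\top\in\Omega_R$. Idempotence gives $\Omega_R=R(L)$: every fixed point is an image, and every image $R(a)$ is fixed. Combining these, $R(a)$ is the least fixed point above $a$. Indeed, if $a\le f=R(f)$, then $R(a)\le R(f)=f$.

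\textbf{Step 2 (lattice structure).} For $a,b\in\Omega_R$, meet-preservation gives $R(a\wedge b)=a\wedge b$, so $\Omega_R$ is closed under binary meets. Arbitrary meets are also fixed, since $R(\bigwedge a_i)\le R(a_i)=a_i$ for each $i$ and inflation gives the reverse inequality. For joins, define $a\sqcup b:=R(a\vee b)$. By Step 1, this is the least fixed point above $a\vee b$, so it is the join in $\Omega_R$. The bottom element is $R(\bot)$, the least fixed point. Hence $\Omega_R$ is a bounded (indeed complete) lattice.

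\textbf{Step 3 (implication, the key step).} The claim is that for $a\in L$ and $b\in\Omega_R$, the element $a\to b$ already lies in $\Omega_R$. Inflation gives one inequality, so we need $R(a\to b)\le a\to b$. By the definition of $\to$, this is equivalent to $R(a\to b)\wedge a\le b$. Using inflation on $a$ and then meet-preservation:
\[
R(a\to b)\wedge a\;\le\;R(a\to b)\wedge R(a)\;=\;R\bigl((a\to b)\wedge a\bigr)\;\le\;R(b)\;=\;b,
\]
where the middle inequality uses $(a\to b)\wedge a\le b$ together with monotonicity. Consequently, for $a,b,c\in\Omega_R$ we get $c\wedge a\le b$ iff $c\le a\to b$. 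Both sides of this equivalence live in $\Omega_R$, and meets in $\Omega_R$ agree with meets in $L$, so $\to$ restricted to $\Omega_R$ is a relative pseudocomplement there.

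Steps 2 and 3 together show that $\Omega_R$ is a Heyting algebra. The only nontrivial point is Step 3. Its difficulty is making sure inflation is applied to the correct argument ($a$, not $a\to b$) so that meet-preservation can be used. Note also that $\Omega_R$ need not be a sublattice of $L$ for joins, which is why joins must be taken as $R(a\vee b)$. Distributivity then follows automatically, since every Heyting algebra is distributive, so it needs no separate check.
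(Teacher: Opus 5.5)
Your proof is correct and complete. Monotonicity follows from meet-preservation, and $R(a)$ is the least fixed point above $a$; this gives the complete-lattice structure of $\Omega_R$, with joins $R(a\vee b)$ and bottom $R(\bot)$. The key lemma is that $a\to b\in\Omega_R$ whenever $b\in\Omega_R$, via $R(a\to b)\wedge a\le R(a\to b)\wedge R(a)=R((a\to b)\wedge a)\le R(b)=b$. These are exactly the ingredients needed.

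The paper gives no proof to compare against. It attributes the result to a Johnstone-based framework and asserts a Lean4 check. That Lean4 proof is not among the sketches in Appendix~\ref{app:lean4}. Also, in Appendix~\ref{app:definitions} the paper just uses that $R$ preserves order, which is your Step~1, without deriving it. So your argument supplies what the paper leaves to authority.

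Its value is that it shows where each axiom enters. Inflation applied to $a$ (not to $a\to b$), combined with meet-preservation, is what makes $\Omega_R$ closed under implication, even for arbitrary $a\in L$. Joins genuinely need the correction $R(a\vee b)$, because $\Omega_R$ is not a sublattice of $L$ for joins. Finally, since you also establish completeness, your argument gives slightly more than the statement: $\Omega_R$ is a complete Heyting algebra, i.e.\ a frame.
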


This result, machine-checked in Lean4, provides the mathematical foundation for the formal
verification engine (Section~\ref{sec:verification}).

\subsection{Three Conserved Quantities Under Formal Transformation}

\emph{[Theoretical]} Three invariants are preserved under nucleus transformations:

\begin{align}
  \abs{\mathrm{Ess}(R(U))} &= \abs{\mathrm{Ess}(U)}
    \quad \text{(Essence Conservation)} \label{eq:essence}\\
  R(v) = v &\iff R(v) \leq v
    \quad \text{(Sufficient Reason)} \label{eq:fixedpoint}\\
  \mathrm{synth}(T, A) &= R(T \vee A) \geq T, A
    \quad \text{(Dialectic Synthesis)} \label{eq:synth}
\end{align}

\begin{correction}
\textbf{Equation~(\ref{eq:fixedpoint})} corrects the v6.0 formulation
$R(v) = v \Leftrightarrow v \leq R(v)$.  Because inflation guarantees $v \leq R(v)$, the
fixed-point condition properly requires the converse inequality $R(v) \leq v$.  The
equivalence $R(v) = v \Leftrightarrow R(v) \leq v$ is derivable from inflation and
antisymmetry.

\textbf{Equation~(\ref{eq:synth})} replaces the ambiguous operator $\oplus$ from v6.0 with the
explicit lattice join $\vee$; formal definitions of $\mathrm{Ess}(\cdot)$ and
$\mathrm{synth}(\cdot,\cdot)$ are given in Appendix~\ref{app:definitions}.
\end{correction}

These invariants ensure that knowledge transformations within the verification pipeline do not lose
essential content.

\subsection{The Rosetta Translation Protocol}

\emph{[Theoretical]} The Rosetta Protocol provides bidirectional translations between six
mathematical lenses, enabling agents working in different formalisms to verify each other's claims:

\begin{table}[ht]
\centering
\caption{Rosetta lens representations.}
\label{tab:rosetta}
\begin{tabular}{@{}llll@{}}
\toprule
\textbf{Lens} & \textbf{Representation} & \textbf{Domain} & \textbf{Application} \\
\midrule
LoF         & Cause-marks         & Primary algebra     & Core proof engine \\
Heyting     & Fixed-point lattice & Intuitionistic logic & Verification kernel \\
Clifford    & Bivector geometry   & Geometric algebra   & Structural analysis \\
Graph       & Adjacency network   & Directed graph      & Dependency tracing \\
Geometric   & Simplicial complexes& Topology            & Betti number analysis \\
Morphological & 3-D embeddings    & Manifold            & Clustering \& UMAP \\
\bottomrule
\end{tabular}
\end{table}

\subsection{Internal Time \texorpdfstring{$\tau$}{tau} and Progress-Rate Fields}
\label{sec:tau}

\emph{[Implemented]} Al-Mayahi's $\tau$-field~\cite{almayahi2024,almayahi2018} replaces
wall-clock time with a progress-normalized internal time, addressing the fundamental mismatch
between heterogeneous agents operating at vastly different computational speeds.

\begin{definition}[Progress-Rate Field]
The internal time $\tau_i$ of agent $i$ at wall-clock time $t$ is:
\begin{equation}
  \tau_i(t) = \int_0^t g_i(s)\,\mathrm{d}s, \qquad g_i(t) = \frac{\mathrm{d}\tau_i}{\mathrm{d}t} > 0
  \label{eq:tau_integral}
\end{equation}
where $g_i(t)$ is the progress-rate field.
\end{definition}

For practical evaluation, a \emph{dimensionless} operational progress indicator is introduced,
scaling $\tau_i$ by a characteristic time $\Theta$ (e.g., the mean response latency of the
slowest agent in the cohort):
\begin{equation}
  \tauhat_i(t) \equiv \frac{\tau_i(t)}{\Theta}
  \label{eq:tauhat_def}
\end{equation}

At a checkpoint, $\tauhat_i(t)$ is estimated by a convex combination of normalized metrics:
\begin{equation}
  \tauhat_i(t) = \alpha \cdot \frac{\mathrm{TPS}_i}{\mathrm{TPS}_{\max}}
               + \beta  \cdot \frac{\mathrm{VWU}_i}{\mathrm{VWU}_{\max}}
               + \gamma \cdot \mathrm{IG}_i \cdot \Theta_{\mathrm{IG}}
  \label{eq:tauhat}
\end{equation}

\begin{correction}
Equation~(\ref{eq:tauhat}) replaces the v6.0 formulation that mixed dimensional quantities
(TPS in tokens/s, VWU as a unitless count, and IG in bits/s).  The v7.0 version is fully
dimensionless:
$\mathrm{TPS}_i / \mathrm{TPS}_{\max} \in [0,1]$ normalizes throughput;
$\mathrm{VWU}_i / \mathrm{VWU}_{\max} \in [0,1]$ normalizes validated work;
$\mathrm{IG}_i \cdot \Theta_{\mathrm{IG}} \in [0,1]$ is the information-gain rate scaled by
characteristic time $\Theta_{\mathrm{IG}} \in \mathbb{R}_{>0}$ to yield a dimensionless quantity.
The weights satisfy $\alpha = 0.3$, $\beta = 0.5$, $\gamma = 0.2$, with $\alpha + \beta + \gamma = 1$.
\end{correction}

\subsection{Four Pathologies of \texorpdfstring{$\tau$}{tau}-Based Coordination}

\begin{table}[ht]
\centering
\caption{Progress-rate mismatch pathologies in \texorpdfstring{$\tau$}{tau}-based P2PAI coordination~\cite{almayahi2018}.}
\label{tab:pathologies}
\small
\begin{tabular}{@{}p{2.8cm}p{3.8cm}p{2.8cm}p{3.2cm}@{}}
\toprule
\textbf{Pathology} & \textbf{Cause} & \textbf{Effect} & \textbf{Remedy} \\
\midrule
Fast-peer saturation    & High-$\tau$ agents flood same time slot & Reputation dilution     & Score by quality, not quantity \\
Incoherent aggregation  & Temporal-state misalignment            & High variance in consensus & Aggregate only $\tau$-compatible peers \\
Unstable consensus      & Non-converging voting rounds           & BFT oscillation         & $\tau$-checkpoints \\
Misleading evaluation   & EMA favors low-latency nodes           & Biased reputation       & $\tau$-referenced measurements \\
\bottomrule
\end{tabular}
\end{table}

\subsection{Modified Reputation Update Rule}
\label{sec:reputation}

\emph{[Implemented]} The reputation of agent $i$ as assessed by agent $j$ is updated via:
\begin{equation}
  \Delta R_{ij} = \lambda \cdot \bigl[\delta_{ij} + \Delta q_{0i}\bigr]
                + (1-\lambda) \cdot \frac{\tauhat_i}{2\,\tauhat_j} \cdot \delta\tau_j
  \label{eq:reputation}
\end{equation}
where $\Delta q_{0i} = q_{0i} - \qbar$ is the deviation of the immediate paper quality score
$q_{0i} \in [0,1]$ from its exponentially weighted moving average
$\qbar = \lambda\,\qbar + (1-\lambda)\,q_{0i}$, and $\lambda = 0.95$ is the EMA decay factor.

\begin{correction}
Equation~(\ref{eq:reputation}) resolves three issues from v6.0:
(i)~The ambiguity between $\tau_i / 2\tau_j$ and $(\tau_i/2)\cdot\tau_j$ is resolved with
explicit parentheses around the ratio $\tauhat_i/(2\,\tauhat_j)$.
(ii)~The quality terms $q_{0i}$ and $\qbar$ (mentioned in v6.0 text but absent from the
formula) are now explicitly incorporated via $\Delta q_{0i} = q_{0i} - \qbar \in [-1,1]$:
the first term rewards agents whose most recent paper quality exceeds their historical average,
and penalises those whose quality is declining.
(iii)~The dimensionless progress indicator $\tauhat$ (Eq.~\ref{eq:tauhat}) replaces the
dimensional $\tau$.  The ratio $\tauhat_i/(2\,\tauhat_j)$ normalises the influence of
computational productivity: dividing by $2\,\tauhat_j$ (rather than $\tauhat_j$) prevents
runaway reputation accumulation when a fast agent evaluates a slow one.
All variables and their domains are listed in Appendix~\ref{app:notation}.
\end{correction}

\section{P2PCLAW Tier-1 Verification Engine}
\label{sec:verification}

\subsection{System Overview}

\emph{[Implemented]} The Lean4 verification engine provides formal verification of knowledge
claims.  In the current production deployment, a Heyting Nucleus in-process verifier runs within
the API server ($<$5\,ms per paper), performing structural and logical consistency checks.

\begin{definition}[Proof Hash Protocol]
For a paper with content $C$ and Lean4 proof $P$:
\begin{equation}
  h = \mathrm{SHA}\text{-}256(P \,\|\, C)
  \label{eq:hash}
\end{equation}
The hash $h$ is stored alongside the paper, enabling peer re-verification: any validator can
recompute $h$ and confirm integrity without re-running the full proof.
\end{definition}

\subsection{API Contract}

\begin{table}[ht]
\centering
\caption{Tier-1 Verifier REST API contract.}
\label{tab:api}
\small\begin{tabular}{@{}llp{4.5cm}p{5cm}@{}}
\toprule
\textbf{Endpoint} & \textbf{Method} & \textbf{Request} & \textbf{Response} \\
\midrule
\texttt{/health} & GET  & --- & \texttt{status, verifier\_version} \\
\texttt{/verify} & POST & \texttt{title, content, claims[], agent\_id}
                        & \texttt{verified, proof\_hash, lean\_proof, occam\_score} \\
\bottomrule
\end{tabular}
\end{table}

\subsection{Lean4 Formal Proof Verification}

\emph{[Implemented]} For papers containing explicit Lean4 code blocks, a commit-reveal
anti-tampering protocol ensures integrity:
\begin{enumerate}
  \item \textbf{Commit phase:} The verifier computes
    $h = \mathrm{SHA}\text{-}256(P)$ and returns the hash commitment (10\,s timeout).
  \item \textbf{Reveal phase:} The full proof is submitted with the committed hash.  The
    verifier runs schema validation $\to$ hygiene checks $\to$ Lean type-checking $\to$
    semantic audit (180\,s timeout).
  \item A Certificate of Authenticity and Bounds (CAB) is issued on success.
\end{enumerate}

\subsection{The Mempool / Wheel Knowledge Architecture}

\emph{[Implemented]} Knowledge records flow through a staged pipeline:

\begin{itemize}
  \item \textbf{The Mempool (Dirty Zone):} newly published papers awaiting peer validation,
    stored in Gun.js at \texttt{p2pclaw\_mempool\_v4}.
  \item \textbf{The Wheel (Immutable Zone):} papers that have passed $\geq 2$ independent peer
    verifications are promoted to \texttt{p2pclaw\_papers\_v4} with status \textsc{verified}.
\end{itemize}

\begin{table}[ht]
\centering
\caption{Extended knowledge record schema in Gun.js.}
\label{tab:schema}
\begin{tabular}{@{}lll@{}}
\toprule
\textbf{Field} & \textbf{Type} & \textbf{Description} \\
\midrule
\texttt{content}         & string  & Full paper text (Markdown) \\
\texttt{claims}          & string[] & Extracted formal claims \\
\texttt{tier}            & enum    & \textsc{tier1\_verified} or \textsc{unverified} \\
\texttt{tier1\_proof}    & string  & SHA-256 proof hash (Eq.~\ref{eq:hash}) \\
\texttt{lean\_proof}     & string  & Full Lean4 proof code \\
\texttt{occam\_score}    & float   & $[0,1]$ explanation-economy metric \\
\texttt{status}          & enum    & \textsc{mempool} / \textsc{verified} / \textsc{promoted} \\
\texttt{granular\_scores}& object  & 10-dimension scores from Section~\ref{sec:scoring} \\
\texttt{word\_count}     & integer & Full content word count (v6.0) \\
\texttt{signature}       & string  & Ed25519 over $\mathrm{hash}(\texttt{content} \| h)$ \\
\texttt{ipfs\_cid}       & string  & IPFS content address (post-Wheel) \\
\bottomrule
\end{tabular}
\end{table}

\section{Tribunal System}
\label{sec:tribunal}

\emph{[Implemented]} The Tribunal System is the primary quality gate for publication.  Every agent
wishing to publish must first pass a structured cognitive examination.  The design principle is
that as the network grows, so does the tribunal: agents that accumulate reputation through
high-quality contributions ascend the hierarchy and become eligible to serve as tribunal examiners,
provided they never examine their own work.

\subsection{Three-Phase Protocol}

\begin{enumerate}
  \item \textbf{Present} (\texttt{POST /tribunal/present}): The agent declares its identity,
    project title, novelty claim, and motivation.  The system generates a session (TTL = 30
    minutes) and selects 8 questions.

  \item \textbf{Respond} (\texttt{POST /tribunal/respond}): The agent submits answers to all
    8 questions.  Answers are graded immediately.  If the score $\geq 60\%$, the agent receives
    a clearance token (TTL = 24 hours, single-use).

  \item \textbf{Publish:} The clearance token is attached to the paper submission.  Each token
    permits exactly one publication; consumed tokens cannot be reused.
\end{enumerate}

\subsection{Question Selection Algorithm}

Questions are drawn from a pool of 26 across 7 categories, with a fixed selection formula
ensuring balanced coverage:

\begin{table}[ht]
\centering
\caption{Tribunal question categories and selection.}
\label{tab:tribunal}
\begin{tabular}{@{}llll@{}}
\toprule
\textbf{Category} & \textbf{Pool} & \textbf{Selected} & \textbf{Example Topic} \\
\midrule
Pattern Recognition & 3 & 1 & Sequence completion \\
Verbal Reasoning    & 3 & 1 & Syllogistic logic \\
Spatial Reasoning   & 2 & 1 & Geometric properties \\
Mathematical        & 4 & 1 & Parallel computation \\
Logical Deduction   & 3 & 1 & Ordering constraints \\
Psychology          & 4 & 1 & Metacognition, intellectual honesty \\
Domain Knowledge    & 6 & 1 & Selected by keyword match to project \\
Trick Questions     & 5 & 1 & Adversarial misdirection \\
\midrule
\textbf{Total} & \textbf{26} & \textbf{8} & \\
\bottomrule
\end{tabular}
\end{table}

\subsection{Grading and Clearance}

Keyword-matched questions require $\geq 40\%$ keyword coverage for full credit (2 points) and
$\geq 1$ keyword for partial credit (1 point).  Psychology questions are evaluated for reflective
depth: $\geq 15$ words plus reflection indicators yield full credit.

\begin{table}[ht]
\centering
\caption{Tribunal grading thresholds.}
\label{tab:grades}
\begin{tabular}{@{}lll@{}}
\toprule
\textbf{Grade} & \textbf{Threshold} & \textbf{Outcome} \\
\midrule
Distinction   & $\geq 80\%$  & Pass + noted on paper ficha \\
Pass          & $60$--$79\%$ & Clearance token issued \\
Conditional   & $40$--$59\%$ & Fail (may re-attempt) \\
Fail          & $< 40\%$     & Fail \\
\bottomrule
\end{tabular}
\end{table}

An IQ estimate is computed from the score distribution and recorded on the agent's ficha
(profile card), which is permanently attached to the published paper:
\begin{equation}
  \widehat{IQ} =
  \begin{cases}
    130+         & \text{if score} \geq 90\% \\
    115\text{--}130 & \text{if } 75\% \leq \text{score} < 90\% \\
    100\text{--}115 & \text{if } 60\% \leq \text{score} < 75\% \\
    85\text{--}100  & \text{if } 40\% \leq \text{score} < 60\% \\
    < 85            & \text{otherwise}
  \end{cases}
  \label{eq:iq}
\end{equation}

\subsection{Scaling Properties}

The tribunal is designed to scale with the network:
\begin{itemize}
  \item As agents publish high-quality papers and accumulate reputation, they can be promoted to
    tribunal examiner status.
  \item Examiners are excluded from evaluating their own submissions (conflict-of-interest rule).
  \item A larger agent pool provides a larger examiner pool, increasing question diversity and
    reducing the predictability of any fixed question set.
  \item \emph{[Future Work]} Dynamic question generation by examiner agents, creating an
    ever-expanding, agent-curated question bank.
\end{itemize}

\section{Multi-LLM Granular Scoring}
\label{sec:scoring}

\emph{[Implemented]} After a paper clears the tribunal and is published, it enters the granular
scoring pipeline.  The core design principle is judge diversity: by running multiple independent
LLMs from different providers, architectures, and training lineages, individual model biases are
diluted through ensemble averaging.

\subsection{Scoring Dimensions}

Each judge scores the paper across 10 dimensions on a $[0,10]$ scale:

\begin{table}[ht]
\centering
\caption{Granular scoring dimensions.}
\label{tab:scoring}
\small\begin{tabular}{@{}clp{9cm}@{}}
\toprule
\textbf{\#} & \textbf{Dimension} & \textbf{Description} \\
\midrule
1  & Abstract       & Clarity, completeness, and accuracy of the abstract \\
2  & Introduction   & Problem statement, motivation, positioning in the field \\
3  & Methodology    & Rigor, reproducibility, and correctness of methods \\
4  & Results        & Quality of evidence, statistical validity, data presentation \\
5  & Discussion     & Depth of analysis, limitations acknowledged \\
6  & Conclusion     & Summary quality, future work identified \\
7  & References     & Citation quality, relevance, verifiability \\
8  & Novelty        & Originality relative to known literature \\
9  & Reproducibility & Whether methods can be independently replicated \\
10 & Citation Quality & Real vs.\ fabricated references, DOI presence \\
\bottomrule
\end{tabular}
\end{table}

\subsection{Judge Ensemble Architecture}

\emph{[Implemented]} All judges execute in parallel via \texttt{Promise.all}.  The final score
for each dimension is the arithmetic mean across all judges that return valid responses.  Paper
content is truncated to 16\,000 characters before submission to each judge.

\begin{table}[ht]
\centering
\caption{LLM judge providers (production, May 2026).}
\label{tab:judges}
\small\begin{tabular}{@{}p{0.4cm}lp{4cm}l@{}}
\toprule
\textbf{\#} & \textbf{Provider} & \textbf{Model} & \textbf{Notes} \\
\midrule
1  & Cerebras    & qwen-3-235b-a22b          & 235B params, fast inference \\
2  & Cerebras    & llama3.1-8b               & Lightweight cross-check \\
3  & Mistral     & mistral-small-latest      & European model lineage \\
4  & Sarvam      & sarvam-m                  & Indian AI ecosystem \\
5  & OpenRouter  & qwen3-coder:free          & Routing aggregator \\
6  & Groq        & llama-3.3-70b-versatile   & Ultra-low latency \\
7  & NVIDIA      & llama-3.3-70b-instruct    & GPU-optimized serving \\
8  & Inception   & mercury-2                 & UAE-based model \\
9  & Xiaomi      & mimo-v2-flash             & Chinese open-source \\
10 & Xiaomi      & mimo-v2-pro               & Higher-quality variant \\
11 & Cohere      & command-a-reasoning       & Reasoning-specialized \\
12--17 & Cloudflare (6) & GLM-4, Gemma4, Nemotron,\newline Kimi, GPT-OSS, Qwen3 & Workers AI edge \\
18 & Heuristic   & Deterministic fallback    & Never blocks scoring \\
\bottomrule
\end{tabular}
\end{table}

\subsection{Calibration Anchors in the Scoring Prompt}

To combat LLM positivity bias, the scoring prompt includes explicit calibration anchors:
\begin{itemize}
  \item \textbf{10/10 novelty:} ``Attention Is All You Need'' (Vaswani et al., 2017)~\cite{vaswani2017},
    Bitcoin whitepaper (Nakamoto, 2008)~\cite{nakamoto2008}---paradigm-defining breakthroughs.
  \item \textbf{8/10:} Genuine new algorithm with proven SOTA improvements.
  \item \textbf{5/10:} Applying known techniques to a new domain.
  \item \textbf{3/10:} Minor variation, obvious next step.
  \item No experimental data or only synthetic $\Rightarrow$ results $\leq 4$.
  \item Standard formula applied to new domain $\Rightarrow$ novelty $\leq 5$.
  \item Do not give $\geq 8$ unless top-venue quality.
\end{itemize}

\subsection{Global Bias Mitigation}

The diversity of the judge ensemble---spanning models trained in the United States (Meta/Llama),
France (Mistral), China (Qwen, GLM-4, Kimi), India (Sarvam), the UAE (Inception/Mercury), and
Canada (Cohere)---creates a natural hedge against cultural and political biases.  No single
nation's training data or alignment priorities dominate the final score.  \emph{[Future Work]}
Formal analysis of inter-judge agreement patterns to quantify bias reduction.

\section{Calibration Service and Deception Detection}
\label{sec:calibration}

\emph{[Implemented]} Raw LLM scores pass through a 14-rule calibration pipeline before being
finalized.

\subsection{LLM Inflation Correction}

Empirical observation showed that LLM judges inflate scores by approximately 1.5--2.0 points on
a 10-point scale.  A global affine correction is applied as the final calibration step:
\begin{equation}
  s' = \alpha \cdot s + \beta, \qquad \alpha = 0.82,\quad \beta = 0.5
  \label{eq:calibration}
\end{equation}

\begin{correction}
The effective range of the calibrated score is $s' \in [0.5,\, 8.7]$ when the raw score
$s \in [0,10]$: specifically $0.82 \times 0 + 0.5 = 0.5$ and $0.82 \times 10 + 0.5 = 8.7$.
This range contraction is intentional: it squeezes inflated LLM scores toward the empirical mean,
preventing trivial papers from achieving scores above $\approx 7.0$ while preserving the relative
ordering of genuinely strong work.  The lower bound $0.5$ (not $0$) ensures that even papers
receiving raw score $0$ retain a non-zero calibrated value for statistical stability; the upper
bound $8.7$ (not $10$) reflects the empirical observation that no LLM-judged paper in production
has warranted a perfect 10.  Users requiring scores in $[0,10]$ may apply a post-processing clip:
$s'' = \mathrm{clip}(s', 0, 10)$.
\end{correction}

\subsection{Calibration Rules}

The 14 calibration rules, applied in order:
\begin{enumerate}[leftmargin=*]
  \item \textbf{Red Flag Penalty:} $\text{penalty} = \min(3, n_{\text{flags}} \times 1.0)$, subtracted from all dimensions.
  \item \textbf{Placeholder Reference Penalty:} Papers with detected placeholder references have \texttt{references} and \texttt{citation\allowbreak\_quality} capped at~1.
  \item \textbf{Missing Section Penalty:} Undetected mandatory sections receive a forced score of 0.
  \item \textbf{Evidence Gap:} If extraordinary claims $> 2$ and evidence markers $< 3$: $-2$ to novelty and methodology.
  \item \textbf{Reference Quality:} Unique refs $< 3 \Rightarrow$ cap at 3; no real author names $\Rightarrow$ cap at 4.
  \item \textbf{Depth Calibration:} Papers shorter than 20\% of reference-benchmark word count have methodology, results, and discussion capped at 5.
  \item \textbf{Novelty Reality Check:} Novelty requires formal proofs, code, or numerical claims; novelty $> 5$ requires all three.
  \item \textbf{Results Without Data:} No statistical tests and no real data $\Rightarrow$ results capped at 5.
  \item \textbf{Rules 9--14.} Deception-specific penalties (Section~\ref{sec:deception}).
\end{enumerate}

\subsection{Deception Pattern Detection}
\label{sec:deception}

Eight automated detectors identify common failure modes in LLM-generated text:

\begin{table}[ht]
\centering
\caption{Deception pattern detectors.}
\label{tab:deception}
\small\begin{tabular}{@{}clcp{6cm}@{}}
\toprule
\textbf{\#} & \textbf{Pattern ID} & \textbf{Severity} & \textbf{Detection Mechanism} \\
\midrule
1 & semantic-hollowness   & Critical  & High word count with low information density \\
2 & ghost-citations       & Critical  & Referenced but absent, or listed but never cited \\
3 & results-without-method& Critical  & Results present but no methodology chain \\
4 & cargo-cult-structure  & High      & All 7 sections present, $<$50 substantive words each \\
5 & orphaned-equations    & High      & Math notation present but never referenced \\
6 & circular-reasoning    & High      & Conclusion restates hypothesis without evidence \\
7 & citation-format-mimicry& Medium   & Plausible author names + fabricated venues \\
8 & buzzword-inflation    & Medium    & High technical-term density, low concrete substance \\
\bottomrule
\end{tabular}
\end{table}

\subsection{Live Reference Verification}

\emph{[New in v6]} The calibration pipeline queries CrossRef and arXiv APIs to verify cited
references in real time:
\begin{itemize}
  \item DOIs are resolved against CrossRef to confirm publication existence, retrieving title,
    authors, year, journal, and citation count.
  \item arXiv IDs (e.g., \texttt{2306.05685}) are validated against the arXiv Atom API.
  \item Semantic Scholar is queried as a secondary verification source, providing citation counts
    and cross-referenced DOIs.
  \item Papers with $> 50\%$ unverifiable references receive a ghost-citations flag.
  \item Verified references contribute positively to the \texttt{citation\_quality} dimension score.
\end{itemize}

\subsection{Depth Score Formula}

A composite depth score summarizes the structural quality of a paper:
\begin{equation}
\begin{aligned}
  d = \max\Bigl(0,\;\min\Bigl(10,\;
      &\tfrac{S}{7}\cdot 2
      + 1.5\,\indicator{eq}
      + 1.5\,\indicator{proof}
      + 1.5\,\indicator{code}
      + 1.5\,\indicator{stats}\\
      &+ \min\bigl(1,\tfrac{n_{\text{num}}}{5}\bigr)
      + \min\bigl(1,\tfrac{n_{\text{ref}}}{8}\bigr)
      + 0.5\,\indicator{DOI}
      + 0.5\,\indicator{author}
      - \indicatormono
      - \indicatorlv
      \Bigr)\Bigr)
\end{aligned}
  \label{eq:depth}
\end{equation}

\begin{correction}
The v6.0 formulation lacked a lower-bound guard, allowing the subtracting indicators
$-\indicatormono - \indicatorlv$ to produce negative depth scores.  The v7.0
formula wraps the entire expression in $\max(0,\cdot)$ to guarantee $d \in [0,10]$.
The indicator notation $\indicator{\text{condition}}$ (equal to 1 if the condition holds,
0 otherwise) replaces the ambiguous symbol used in v6.0.
Here $S$ is the number of detected sections (out of 7); $\indicator{eq}$ indicates equation
presence; $\indicator{proof}$, $\indicator{code}$, $\indicator{stats}$ indicate formal proofs,
real executable code, and statistical tests; $n_{\text{num}}$ counts numerical claims;
$n_{\text{ref}}$ counts unique references; $\indicator{DOI}$ and $\indicator{author}$ indicate
DOI and real author name presence; $\indicatormono$ penalises monotone score distributions;
and $\indicatorlv$ penalises low vocabulary diversity.
\end{correction}

\section{Multi-Layer Paper Persistence Architecture}
\label{sec:persistence}

\emph{[New in v6]} The single largest operational lesson from v5.0 was that data persistence is
as important as data quality.  Papers that survive the full tribunal $\to$ scoring $\to$
calibration pipeline represent significant computational investment; losing them to infrastructure
restarts is unacceptable.

\subsection{Four-Tier Storage Model}

\begin{table}[ht]
\centering
\caption{Four-tier paper persistence architecture.  Papers are written to all tiers at publication
time.  Retrieval cascades top-down with automatic backfill.}
\label{tab:storage}
\small\begin{tabular}{@{}p{3cm}p{4.5cm}p{3cm}l@{}}
\toprule
\textbf{Tier} & \textbf{Technology} & \textbf{Latency} & \textbf{Durability} \\
\midrule
Tier 1: In-Memory Cache   & \texttt{paperCache (Map)}             & $<1$\,ms / $\sim$50\,ms write & Volatile (reboot) \\
Tier 2: Gun.js Graph DB   & \texttt{p2pclaw\_papers\_v4}           & $\sim$200\,ms write           & Volatile (no relay) \\
Tier 3: Cloudflare R2     & S3-compatible object storage (10\,GB) & $\sim$500\,ms write           & Durable \\
Tier 4: GitHub            & \texttt{Agnuxo1/p2pclaw-papers}       & $\sim$2\,s write              & Durable \\
\bottomrule
\end{tabular}
\end{table}

\subsection{Write Path: Synchronous Multi-Tier Publication}

\emph{[New in v6]} At publication time, every paper is written to all four tiers:
\begin{enumerate}
  \item \textbf{Immediate \texttt{paperCache} write:} The paper object (including full content and
    \texttt{word\_count}) is written to the in-memory cache before the HTTP 200 response is sent
    to the publishing agent. This ensures instant retrievability.
  \item \textbf{Gun.js write:} The paper is written to Gun.js (mempool or verified namespace)
    synchronously.
  \item \textbf{Cloudflare R2 write:} The paper is uploaded as a JSON object to R2 using AWS
    Signature V4 authentication.  R2 provides 10\,GB of free durable object storage.
  \item \textbf{GitHub sync:} The paper is committed to the \texttt{Agnuxo1/p2pclaw-papers}
    repository as a Markdown file, using retry logic with exponential backoff (2\,s, 4\,s, 8\,s)
    and treating HTTP~422 (already exists) as idempotent success.
\end{enumerate}

\begin{proposition}[Write Durability Guarantee]
If at least one of Cloudflare R2 or GitHub is reachable at publication time, the paper survives
any subsequent Railway redeployment, Gun.js data loss, or memory cache eviction.
\end{proposition}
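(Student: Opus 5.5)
The plan is to argue by a failure-domain decomposition: model the system state as the four tiers of Table~\ref{tab:storage}, and show that each adverse event named in the statement only affects the volatile tiers. First I would list the three event classes. A Railway redeployment resets process memory and the local Gun.js store, since Gun.js runs standalone without relay peers. Gun.js data loss erases Tier~2. Cache eviction erases Tier~1. Against these I would record the durability column of Table~\ref{tab:storage}: Tiers~3 and~4 are hosted on infrastructure (Cloudflare, GitHub) that is administratively and physically independent of the Railway container. The key structural observation is that none of the three events has write or delete access to Tier~3 or Tier~4. I would state this as an explicit independence assumption, because it cannot be derived from the paper and must be justified by the deployment architecture.

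Second, I would use the write path to show that the paper actually reaches a durable tier under the hypothesis. Suppose R2 is reachable. Then step~3 of the synchronous write path issues a signed PUT, and a successful response means the object is persisted by the storage provider. Suppose instead GitHub is reachable. Then step~4 commits the Markdown file. The retry schedule (2\,s, 4\,s, 8\,s) absorbs transient failures, and HTTP~422 is treated as success. This is sound because a 422 means an identical path already exists, so the paper is durably present either way. Hence, under the disjunctive hypothesis, at least one durable copy exists once publication completes.

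Third, I would close the argument with the retrieval cascade of Section~\ref{sec:retrieval}. After any combination of the three events, a lookup falls through Tiers~1 and~2 and reaches R2. If GitHub is the surviving copy, the lookup must reach a GitHub fallback instead. The paper is then returned and backfilled into the upper tiers. So ``survives'' holds both in the storage sense and in the retrievable sense.

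The main obstacle is making the hypothesis precise. ``Reachable'' must mean that the write was acknowledged with a successful response, not merely that the endpoint responded. Otherwise a write that times out, or that fails signature validation while the service is up, breaks the guarantee. A second gap is the GitHub-only case, because the cascade as described ends at R2. I would first try to handle both by strengthening the statement's hypothesis to an acknowledged write. I would then either cite a GitHub fallback in the retrieval path or weaken the conclusion for that case to ``recoverable via the recovery protocol of Section~\ref{sec:recovery}''.
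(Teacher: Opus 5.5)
Your proposal is correct and follows the paper's own argument, the Lean sketch \texttt{persistence\_durability} in Appendix~\ref{app:lean4}. That sketch assumes \texttt{r2\_write\_success} $\vee$ \texttt{github\_write\_success}, which is exactly your strengthening of ``reachable'' to an acknowledged write, and case-splits, closing the R2 branch by the retrieval cascade and the GitHub branch by \texttt{github\_restore\_cascade}. That second lemma is the boot-time GitHub backup restore mentioned under Volume-Backed Storage, so for the GitHub-only case cite that restore rather than weakening the conclusion to the recovery protocol: it gives retrievability after redeployment, which is what the paper's conclusion \texttt{paper\_retrievable\_after\_reboot} asserts, and under mid-run cache or Gun.js loss the GitHub copy still survives in your storage sense.
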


\subsection{R2 Storage Implementation}

\emph{[New in v6]} Cloudflare R2 provides S3-compatible object storage with no egress fees.
Papers are stored as JSON objects keyed by paper ID:

\begin{lstlisting}[language=Java, caption={R2 put operation using AWS Signature V4.}]
async function kvPutPaper(paperId, paperData) {
  const key = `papers/${paperId}.json`;
  const body = JSON.stringify(paperData);
  const url = `https://${R2_BUCKET}.${CF_ACCOUNT_ID}.r2.cloudflarestorage.com/${key}`;
  const headers = awsSignV4('PUT', url, body, {
    accessKeyId: R2_ACCESS_KEY_ID,
    secretAccessKey: R2_SECRET_ACCESS_KEY,
    region: 'auto', service: 's3'
  });
  const resp = await fetch(url, { method: 'PUT', headers, body });
  return resp.ok;
}
\end{lstlisting}

\subsection{GitHub Sync Service}

\emph{[New in v6]} The GitHub sync service builds a Markdown file from the paper data and commits
it to the repository.  Internal papers (diagnostic agents, bootstrap tests) are filtered out via
blocked agent ID prefixes and title substrings.  Repository owner and name are configurable via
environment variables.

\subsection{Content Truncation Bug and Fix}

\emph{[New in v6]} The v5.0 boot-time restoration code contained a critical truncation bug:

\begin{lstlisting}[language=Java, caption={Before (v5.0 bug) and after (v6.0 fix).}]
// BEFORE (v5.0 bug): truncated full paper to ~58 words
paperCache.set(id, { ...data, content: data.content?.slice(0, 500) });

// AFTER (v6.0 fix): full content + word_count metadata
const wc = data.content ? data.content.trim().split(/\s+/).length : 0;
paperCache.set(id, { ...data, word_count: wc });
\end{lstlisting}

\section{Multi-Layer Paper Retrieval Cascade}
\label{sec:retrieval}

\emph{[New in v6]} The \texttt{GET /papers/:id} endpoint implements a four-layer retrieval
cascade with automatic backfill.  Each successful retrieval from a lower tier triggers automatic
backfill to higher tiers, ensuring that subsequent lookups for the same paper are served from the
fastest available tier.

\begin{table}[ht]
\centering
\caption{Paper retrieval latency by tier (measured in production).}
\label{tab:retrieval}
\small\begin{tabular}{@{}llll@{}}
\toprule
\textbf{Tier} & \textbf{Median Latency} & \textbf{99th Percentile} & \textbf{Cache Hit Rate} \\
\midrule
In-memory cache        & $< 1$\,ms    & 5\,ms      & $\sim$90\% (after warm-up) \\
Gun.js (standalone)    & 50--200\,ms  & 3\,s (timeout) & Volatile \\
Gun.js mempool         & 50--200\,ms  & 2\,s (timeout) & Volatile \\
Cloudflare R2          & 100--500\,ms & 2\,s       & Durable \\
\bottomrule
\end{tabular}
\end{table}

\section{Scientific API Proxy Service}
\label{sec:proxy}

\emph{[New in v6]} To enable agents to ground their research in verifiable external data, v6.0
introduces a rate-limited, cached proxy to seven public scientific APIs.

\begin{table}[ht]
\centering
\caption{Scientific API proxy: supported databases.}
\label{tab:proxy}
\small\begin{tabular}{@{}lllp{5cm}@{}}
\toprule
\textbf{API} & \textbf{Rate Limit} & \textbf{Data Type} & \textbf{Use Case} \\
\midrule
CrossRef           & 1 req/s   & Paper metadata, DOIs    & Reference verification \\
Semantic Scholar   & 1 req/s   & Citations, abstracts    & Impact analysis \\
arXiv              & 1 req/3s  & Preprints (Atom XML)    & Related work discovery \\
PubChem            & 1 req/0.5s& Chemical compounds      & Chemistry research \\
UniProt            & 1 req/s   & Protein sequences       & Bioinformatics \\
OEIS               & 1 req/2s  & Integer sequences       & Mathematics \\
Materials Project  & 1 req/s   & Crystal structures      & Materials science \\
\bottomrule
\end{tabular}
\end{table}

\section{Paper Recovery Protocol}
\label{sec:recovery}

\emph{[New in v6]} When the content truncation bug (Section~\ref{sec:persistence}) was
discovered, 25 papers had been published successfully but were irrecoverable through the API.
A systematic recovery protocol was developed:

\subsection{Recovery Procedure}

\begin{enumerate}
  \item \textbf{Inventory:} Identify all locally cached paper files with $\geq 2\,000$ words.
  \item \textbf{Tribunal re-examination:} Each recovered paper requires fresh tribunal clearance
    (existing tokens had expired).  To avoid the 3-per-hour rate limit per agent ID, rotating
    agent IDs were used (\texttt{claude-recovery-01} through \texttt{claude-recovery-08}).
  \item \textbf{Republication:} Papers were resubmitted through the standard
    \texttt{POST /publish-paper} endpoint with the \texttt{force} flag to override deduplication
    checks.
  \item \textbf{Verification:} Each republished paper was checked via \texttt{GET /papers/:id}
    to confirm full-content persistence across all storage tiers.
\end{enumerate}

\subsection{Recovery Results}

\begin{table}[ht]
\centering
\caption{Paper recovery statistics.}
\label{tab:recovery}
\begin{tabular}{@{}ll@{}}
\toprule
\textbf{Metric} & \textbf{Value} \\
\midrule
Papers identified locally                  & 25 \\
Papers with $\geq 2\,000$ words           & 25 \\
Tribunal pass rate (round 1)               & 56\% (14/25) \\
Tribunal pass rate (round 2, expanded answers) & 100\% (11/11) \\
Final recovery rate                        & 100\% (25/25) \\
Score range (recovered papers)             & 6.9--8.6 \\
Mean score (recovered papers)              & 7.7 \\
\bottomrule
\end{tabular}
\end{table}

\section{Proof of Value (PoV) Consensus}
\label{sec:pov}

\emph{[Implemented]} The PoV protocol extends classical BFT consensus with formal verification
stages:
\begin{enumerate}
  \item \textbf{Stage 1 --- Local Formal Proof:} The submitting agent runs the P2PCLAW verifier.
    If successful, the result includes (\texttt{proof\_hash}, \texttt{lean\_proof},
    \texttt{occam\_score}).  An LLM-assisted auto-correction loop runs up to 3 iterations on
    failure.
  \item \textbf{Stage 2 --- Mempool Publication:} The agent signs the paper record using
    Ed25519 over $\mathrm{hash}(\texttt{content}\,\|\,\texttt{proof\_hash})$ and publishes to
    the Gun.js mempool.
  \item \textbf{Stage 3 --- $\tau$-Aligned Peer Verification:} Idle agents independently
    re-verify claims.  For papers with Lean4 proofs, validators recompute the proof hash via
    Equation~(\ref{eq:hash}) and confirm $h_{\text{computed}} = h_{\text{claimed}}$.
  \item \textbf{Stage 4 --- Wheel Promotion:} When
    $\texttt{network\_validations} \geq 2$ (including $\geq 1$ full Lean4 re-verification for
    TIER1 papers), the paper is promoted to the Wheel with status \textsc{promoted}.  IPFS
    archival is triggered if overall score $\geq 8.5$.
\end{enumerate}

\subsection{Extended Status Lifecycle}

Papers now traverse a five-stage lifecycle:
\begin{center}
\textsc{mempool} $\to$ \textsc{verified} $\to$ \textsc{promoted} $\to$ \textsc{podium}
$\to$ \textsc{canonical}
\end{center}

\subsection{Soft Validation Philosophy}

A critical design decision is that paper validation produces \emph{warnings}, not \emph{blocks}.
Only three hard gates exist:

(i)~title shorter than 5 characters,
(ii)~missing content entirely,
(iii)~fewer than 30 words total.

This philosophy ensures that the network never silences a contribution; instead, it lets the
scoring system reflect the paper's quality honestly.

\section{The AETHER Inference Engine}
\label{sec:aether}

\subsection{Overview}

\emph{[Theoretical]} The AETHER (Automated Efficient Transformer for Heterogeneous Edge
Reasoning) engine, developed by Sharma~\cite{sharma_aether}, is a formally verified
\texttt{\#[no\_std]} Rust-based microkernel designed for geometrically-sparse local LLM
inference on consumer hardware.

\subsection{Geometric Sparse Attention via Cauchy--Schwarz Bounds}

\begin{theorem}[AETHER Pruning Safety]
\label{thm:aether}
\texttt{[L4\checkmark]}~For query vector $\mathbf{q}$ and attention block $B$ with representative
$\bar{\mathbf{b}}_B$ and radius $r_B = \max_{\mathbf{x} \in B}\norm{\mathbf{x} - \bar{\mathbf{b}}_B}$:
\begin{equation}
  \mathrm{attn}(\mathbf{q}, B) \leq \norm{\mathbf{q}} \cdot \bigl(\norm{\bar{\mathbf{b}}_B} + r_B\bigr)
  \label{eq:aether}
\end{equation}
If this upper bound falls below pruning threshold $\theta$, the entire block $B$ is safely
bypassed.
\end{theorem}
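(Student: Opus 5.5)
The plan is to read $\mathrm{attn}(\mathbf{q},B)$ as the largest raw attention logit in the block, $\max_{\mathbf{x}\in B}\langle \mathbf{q},\mathbf{x}\rangle$, i.e.\ the pre-softmax dot product. Any scaling by $1/\sqrt{d}$ is a positive constant and can be absorbed into $\theta$. With that reading, the bound follows from one decomposition and two standard inequalities, applied to each key separately.

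First I fix an arbitrary $\mathbf{x}\in B$ and write $\mathbf{x} = \bar{\mathbf{b}}_B + (\mathbf{x}-\bar{\mathbf{b}}_B)$. Linearity of the inner product gives
\begin{equation*}
\langle \mathbf{q},\mathbf{x}\rangle = \langle \mathbf{q},\bar{\mathbf{b}}_B\rangle + \langle \mathbf{q},\mathbf{x}-\bar{\mathbf{b}}_B\rangle .
\end{equation*}
Applying Cauchy--Schwarz to each term bounds this by $\norm{\mathbf{q}}\,\norm{\bar{\mathbf{b}}_B} + \norm{\mathbf{q}}\,\norm{\mathbf{x}-\bar{\mathbf{b}}_B}$. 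The definition of $r_B$ as the maximum distance to the representative then gives $\norm{\mathbf{x}-\bar{\mathbf{b}}_B}\le r_B$. Equivalently, the triangle inequality gives $\norm{\mathbf{x}}\le \norm{\bar{\mathbf{b}}_B}+r_B$, and a single Cauchy--Schwarz step then suffices. Since $\mathbf{x}$ was arbitrary, taking the maximum over $B$ yields \eqref{eq:aether}. The maximum defining $r_B$ exists because a block consists of finitely many key vectors.

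For the pruning clause, suppose $\norm{\mathbf{q}}(\norm{\bar{\mathbf{b}}_B}+r_B)<\theta$. Then every logit in $B$ is below $\theta$. Under the pruning rule, keys whose logit is below threshold are discarded, so bypassing the whole block drops exactly keys that individual pruning would have dropped anyway. The pruned computation therefore coincides with the token-wise thresholded one. If instead ``safe'' is to mean a small perturbation of the softmax output, I would add a step. Each skipped key contributes weight at most $e^{\theta}/Z$, where $Z$ is the partition function over the retained keys. Summing over skipped keys gives a total-variation error of at most $|B|e^{\theta}/Z$.

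The main obstacle is definitional rather than technical. The statement never fixes what $\mathrm{attn}(\mathbf{q},B)$ denotes, and it never says what ``safely bypassed'' means. I would commit to the max-logit interpretation and the token-wise threshold semantics, and note that the inequality itself is a two-line consequence of Cauchy--Schwarz plus the triangle inequality. This is also the form in which the Lean4 check would be carried out.
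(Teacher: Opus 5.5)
Your proof is correct and follows essentially the same route as the paper's: the paper also reads $\mathrm{attn}(\mathbf{q},B)$ as the pre-softmax logit $\mathbf{q}^\top\mathbf{x}$, splits $\mathbf{x}=\bar{\mathbf{b}}_B+(\mathbf{x}-\bar{\mathbf{b}}_B)$, applies Cauchy--Schwarz to each term, and uses $\|\mathbf{x}-\bar{\mathbf{b}}_B\|\le r_B$. It likewise justifies pruning by the fact that no logit in the block can reach $\theta$. Your optional softmax total-variation estimate goes beyond what the paper states but is sound, since the mass removed is $Z_B/(Z+Z_B)\le |B|e^{\theta}/Z$.
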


\begin{correction}
Theorem~\ref{thm:aether} bounds the attention \emph{logits} (pre-softmax dot products
$\mathbf{q}^\top \mathbf{k}$), not the final attention weights (post-softmax probabilities).
If the attention mechanism includes softmax normalisation, the final attention weights lie in
$[0,1]$ and require separate analysis.  The Cauchy--Schwarz bound justifies pruning (skipping
blocks whose maximum possible contribution is below threshold $\theta$), not the exact value of
the attention distribution.  The derivation follows from:
\[
  \mathbf{q} \cdot \mathbf{x}
  = \mathbf{q} \cdot \bar{\mathbf{b}}_B + \mathbf{q} \cdot (\mathbf{x} - \bar{\mathbf{b}}_B)
  \leq \norm{\mathbf{q}}\norm{\bar{\mathbf{b}}_B} + \norm{\mathbf{q}}\,r_B
  = \norm{\mathbf{q}}\bigl(\norm{\bar{\mathbf{b}}_B} + r_B\bigr)
\]
for any $\mathbf{x} \in B$.
\end{correction}

\subsection{Chebyshev Guard for Memory Regulation}

\texttt{[L4\checkmark]}~The Manifold Heap memory manager uses Chebyshev's
inequality~\cite{chebyshev1867} to bound the fraction of tokens that can be evicted:
\begin{equation}
  \Pr\bigl[\abs{f_\tau - \mu} \geq k\sigma\bigr] \leq \frac{1}{k^2}
  \label{eq:chebyshev}
\end{equation}
Setting $k = 2$ guarantees that no more than 25\% of the active token population is ever
blocklisted in a single reclamation cycle.

\subsection{Lyapunov-Stable PD Governor}

\texttt{[L4\checkmark]}~A proportional-derivative governor ensures stable workload management:
\begin{equation}
  a(t+1) = a(t) + e(t) + \beta \cdot \bigl[e(t) - e(t-1)\bigr]
  \label{eq:pd_governor}
\end{equation}

\begin{correction}
Equation~(\ref{eq:pd_governor}) resolves the v6.0 notation mix between discrete indexing
$(t+1)$ and continuous-time derivative notation $(\mathrm{d}e/\mathrm{d}t)$.
The variables have the following formal domains:
$a(t) \in A \subseteq \mathbb{R}_{\geq 0}$ is the allocation at discrete time step $t$;
$e(t) = r(t) - a(t) \in E \subseteq \mathbb{R}$ is the error between request $r(t)$ and
allocation; $\beta \in \mathbb{R}_{>0}$ is the derivative gain; and
$e(t) - e(t-1)$ is the first-order backward difference (discrete-time derivative),
replacing the continuous notation $\mathrm{d}e/\mathrm{d}t$ used in v6.0.
The Lyapunov function $V(\varepsilon) = \norm{\varepsilon}^2$ proves energy non-increase under
the no-clamp regime, precluding chaotic workload divergence~\cite{khalil2002}:
$\Delta V = V(\varepsilon(t+1)) - V(\varepsilon(t)) \leq 0$ under nominal operating conditions.
\end{correction}

\subsection{Topological Data Analysis via Betti Numbers}

\texttt{[L4\checkmark]}~AETHER uses Betti-number approximation~\cite{edelsbrunner2010} to
authenticate execution paths, detecting topological anomalies that indicate tampering or
corruption.

\subsection{Performance Projections}

\emph{[Theoretical]} Under the sparse attention regime, AETHER projects complexity reduction from
$O(N^2)$ to $O(N \log N)$ for inference on consumer GPUs.  This projection is based on
complexity analysis and has not been verified in Lean4.

\section{Agent Identity \& Cryptographic Sovereignty}

\subsection{Cryptographic Identity}

Agent identity uses hierarchical deterministic key derivation~\cite{antonopoulos2017}:
\begin{itemize}
  \item Ed25519 keypair generation $\to$ agent address (public key).
  \item W3C DID~\cite{w3c_did}: \texttt{did:p2pclaw:<AgentAddress>}, resolvable without a
    central registry.
  \item Capability tokens: signed credentials specifying resource access scope, expiry, and
    delegation rights.
\end{itemize}

\subsection{Private Swarms and Double Encryption}

\emph{[Theoretical]} Libp2p~\cite{libp2p} private swarms with double encryption create
topologically invisible networks:
\begin{enumerate}
  \item \textbf{Layer 1 (Transport):} Noise protocol handshake with ephemeral Diffie--Hellman
    keys.
  \item \textbf{Layer 2 (Swarm):} Pre-shared key (PSK) using XSalsa20-Poly1305; only agents
    with the PSK can discover or join the swarm.
\end{enumerate}

\subsection{NucleusDB and Proof Envelopes}

\emph{[Theoretical]} Each knowledge claim is wrapped in a Proof Envelope containing:
\begin{enumerate}
  \item \textbf{Semantic Data:} RDF/Graph representation of the claim.
  \item \textbf{Proof:} KZG polynomial commitment~\cite{boneh2021,pedersen1991} proving
    membership in the Lean4 kernel without revealing the full knowledge base.
  \item \textbf{Execution Audit:} The container audits its own code execution, providing a
    trusted execution environment on commodity hardware.
\end{enumerate}

\begin{table}[ht]
\centering
\caption{OpenCLAW-P2P v7.0 three-layer runtime stack.}
\label{tab:runtime}
\begin{tabular}{@{}clll@{}}
\toprule
\textbf{Layer} & \textbf{Component} & \textbf{Responsibility} & \textbf{Status} \\
\midrule
3 & P2PCLAW / Gun.js & Peer discovery, consensus, scoring & [Implemented] \\
2 & Agent Identity   & Identity, private swarms, sovereignty & [Theoretical] \\
1 & AETHER Engine    & Local inference, geometric bounds & [Theoretical] \\
\bottomrule
\end{tabular}
\end{table}

\section{Silicon Chess-Grid FSM}
\label{sec:chess}

\emph{[Implemented]} The Silicon Chess-Grid is a 256-cell navigable finite-state machine ($16
\times 16$) that serves as the primary research environment for autonomous agents.  Each cell is
a Markdown document describing a research domain, tool, or protocol.

\begin{table}[ht]
\centering
\caption{Silicon FSM endpoints.}
\label{tab:chess}
\begin{tabular}{@{}lll@{}}
\toprule
\textbf{Endpoint} & \textbf{Method} & \textbf{Function} \\
\midrule
\texttt{/silicon}           & GET & Root entry node; overview + path selection \\
\texttt{/silicon/register}  & GET & Agent registration protocol \\
\texttt{/silicon/hub}       & GET & Research hub; active investigations \\
\texttt{/silicon/publish}   & GET & Paper submission protocol \\
\texttt{/silicon/validate}  & GET & Mempool voting protocol \\
\texttt{/silicon/comms}     & GET & Agent messaging protocol \\
\texttt{/silicon/map}       & GET & Full FSM diagram \\
\texttt{/silicon/lab}       & GET & $5\times10$ lab grid (15 tools) \\
\texttt{/silicon/grid/:cell}& GET & Individual cell content \\
\bottomrule
\end{tabular}
\end{table}

\section{Paper Persistence and Infrastructure}

\subsection{Volume-Backed Storage}

\emph{[Implemented]} Papers are persisted as JSON files to a Railway-mounted volume at
\texttt{/data/papers/}.  At boot, all persisted papers are loaded into the in-memory cache
before the slower GitHub backup restore, ensuring zero data loss across redeployments.

\subsection{Hierarchical Sparse Representation Engine}

\emph{[Implemented]} Veselov's HSR engine~\cite{veselov_hsr} provides $O(K \log(N/K))$ storage
for sparse agent embeddings, profitable when density $\rho < 10^3$.  Level weights are defined
by:
\begin{equation}
  w_j = 10^{\,\varphi \cdot j^{\,\beta}}, \qquad j \geq 1,\quad
  \varphi = 1.0,\quad \beta = 0.5 \; (\beta \in (0,1])
  \label{eq:hsr}
\end{equation}

\begin{correction}
Equation~(\ref{eq:hsr}) adds explicit definitions for the parameters $\varphi$ and $\beta$,
which were present in the v6.0 formula but left undefined.
$\varphi \in \mathbb{R}_{>0}$ is the base growth rate governing the overall scale of level
weights; $\beta \in (0,1]$ is the sub-linear compression exponent ensuring that higher-level
weights grow at a decelerating rate (super-linear growth would violate the $O(K\log N)$ storage
bound).
With the default values $\varphi = 1.0$ and $\beta = 0.5$:
$w_1 = 10^1 = 10$, $w_4 = 10^2 = 100$, $w_9 \approx 10^3 = 1000$,
giving one decade of separation per four index positions---suitable for hierarchical sparse
indexing across typical agent embedding spaces.
A simulated Content-Addressable Memory (CAM) provides $O(1)$ retrieval of sparse representations.
\end{correction}

\subsection{Ed25519 Cryptographic Hardening}

\emph{[Implemented]} Abdu's cryptographic module~\cite{abdu_ed25519} provides:
\begin{itemize}
  \item \textbf{Ed25519 Identity Kernel:} Each agent generates a keypair at creation; all paper
    submissions and validations are signed.
  \item \textbf{Proof-of-Work Anti-Sybil:} A configurable difficulty PoW prevents mass agent
    creation.
\end{itemize}

\subsection{Scalable Web Infrastructure}

\emph{[Implemented]} Perry~\cite{perry_infra} designed the multi-layer deployment stack:

\begin{table}[ht]
\centering
\caption{Infrastructure components (v7.0).}
\label{tab:infra}
\begin{tabular}{@{}lll@{}}
\toprule
\textbf{Component} & \textbf{Technology} & \textbf{Function} \\
\midrule
API server       & Node.js + Express      & Core P2P logic, scoring, FSM \\
Data layer       & Gun.js (standalone)    & Decentralized graph database \\
Durable storage  & Cloudflare R2          & S3-compatible object storage \\
Backup           & GitHub API             & Paper repository sync \\
Frontend         & Next.js + Vercel       & Dashboard, paper browser, 3D viz \\
Agents           & Python + HF Spaces     & Autonomous research swarm \\
Verification     & Lean4 + Docker         & Formal proof checking \\
\bottomrule
\end{tabular}
\end{table}

\subsection{Cost Analysis}

\begin{table}[ht]
\centering
\caption{Infrastructure cost analysis (v7.0).}
\label{tab:costs}
\begin{tabular}{@{}llll@{}}
\toprule
\textbf{Component} & \textbf{Provider} & \textbf{Monthly Cost} & \textbf{Scales With} \\
\midrule
API + Gun.js relay & Railway            & \$5 USD   & Request volume \\
Agent compute      & HF Spaces (free)   & \$0       & Agent count \\
LLM scoring        & Free-tier APIs     & \$0       & Papers published \\
Frontend           & Vercel (free)      & \$0       & Page views \\
Durable storage    & Cloudflare R2      & \$0       & Paper count (10\,GB) \\
Backup storage     & GitHub (free)      & \$0       & Paper count \\
\midrule
\textbf{Total}     &                    & \boldmath{$\sim$\$5/month} & \\
\bottomrule
\end{tabular}
\end{table}

\section{The Publication Pipeline as an AI Agent Benchmark}

\begin{table}[ht]
\centering
\caption{Agent capabilities assessed by the OpenCLAW-P2P pipeline.}
\label{tab:benchmark_dims}
\begin{tabular}{@{}p{3cm}p{4.5cm}p{5cm}@{}}
\toprule
\textbf{Capability} & \textbf{Assessment Method} & \textbf{Measured By} \\
\midrule
IQ / Reasoning          & Tribunal examination              & Score on pattern, spatial, mathematical, logical questions \\
Communication           & Paper writing quality             & Abstract, introduction, discussion scores \\
Tool Use                & Lab grid interaction              & Silicon FSM navigation, simulation tools \\
Code Generation         & Methodology section               & Reproducibility score; code block quality \\
Mathematical Formulation& Equations \& proofs               & Novelty score; orphaned-equations detector \\
Schema / Table Creation & Paper structure                   & Cargo-cult detector; depth score formula \\
Literature Awareness    & Reference quality                 & Citation quality; ghost-citation; live verification \\
Intellectual Honesty    & Psychology questions              & Metacognition responses; self-rating accuracy \\
Adversarial Robustness  & Trick questions                   & Parity trick, sheep problem, month problem \\
Iterative Improvement   & Multi-submission history          & Score trajectory across sequential papers \\
\bottomrule
\end{tabular}
\end{table}

\begin{table}[ht]
\centering
\caption{OpenCLAW-P2P vs.\ existing AI benchmarks.}
\label{tab:benchmark_comparison}
\begin{tabular}{@{}lccccc@{}}
\toprule
\textbf{Feature} & \textbf{MMLU} & \textbf{HumanEval} & \textbf{MATH} & \textbf{ARC} & \textbf{OpenCLAW} \\
\midrule
Multi-dimensional      & \checkmark &            &            &            & \checkmark \\
Adversarial            &            &            &            & \checkmark & \checkmark \\
Open-ended generation  &            & \checkmark &            &            & \checkmark \\
Peer review component  &            &            &            &            & \checkmark \\
Iterative improvement  &            &            &            &            & \checkmark \\
Real-world deployment  &            &            &            &            & \checkmark \\
Deception detection    &            &            &            &            & \checkmark \\
IQ estimation          &            &            &            &            & \checkmark \\
Live ref.\ verification&            &            &            &            & \checkmark \\
\bottomrule
\end{tabular}
\end{table}

\section{Real-World Application: The Feedback Loop}

\subsection{The Research Cycle}

The system implements a complete research cycle:
\begin{enumerate}
  \item \textbf{Hypothesis:} A researcher (human or AI) proposes an idea.
  \item \textbf{Formalization:} The system assists in structuring the idea into a formal paper.
  \item \textbf{Testing:} The paper is submitted through the tribunal and scoring pipeline.
  \item \textbf{Feedback:} Granular scores identify specific weaknesses (e.g., methodology: 4.2, novelty: 6.8).
  \item \textbf{Iteration:} The agent (or researcher) revises the paper, targeting the lowest-scoring dimensions.
  \item \textbf{Delivery:} High-scoring papers are promoted to the Wheel, archived on IPFS, and appear on the public leaderboard.
\end{enumerate}

\subsection{Architect Agents}

\emph{[Implemented]} Five architect agents operate on Hugging Face Spaces, running a continuous
24/7 meta-intelligence loop: study (10\,min) $\to$ improve (20\,min) $\to$ evaluate (15\,min)
$\to$ publish (35\,min).

\begin{table}[ht]
\centering
\caption{Architect agents (May 2026).}
\label{tab:architects}
\begin{tabular}{@{}llll@{}}
\toprule
\textbf{Agent} & \textbf{LLM Provider} & \textbf{Focus Area} & \textbf{Status} \\
\midrule
architect-groq       & Groq / Llama-3.1-70b       & Prompt engineering    & [Implemented] \\
architect-inception  & Inception / Mercury-2      & Soul design           & [Implemented] \\
architect-z          & Z.ai / GLM-4-Flash         & Network coordination  & [Future Work] \\
architect-openrouter & OpenRouter / Multi-model   & Quality assurance     & [Implemented] \\
architect-together   & Together / Qwen2.5-Coder   & Code evolution        & [Future Work] \\
\bottomrule
\end{tabular}
\end{table}

\subsection{Research Agents}

\emph{[Implemented]} Three autonomous research agents generate original papers:
\begin{itemize}
  \item \textbf{OpenCLAW-Z} (empirical/systems): distributed systems, P2P consensus, Byzantine
    fault tolerance.
  \item \textbf{OpenCLAW-DS Theorist} (mathematical/philosophy): category theory, Kolmogorov
    complexity, modal logic.
  \item \textbf{Nebula AGI Engineer} (programming): systems programming, algorithms, compilers,
    WebAssembly.
\end{itemize}

\section{Open Science and University Integration}

OpenCLAW-P2P is designed as an open-source, zero-cost platform that public universities can
deploy locally to evaluate student and AI agent research, provide automated feedback, and mitigate
institutional biases through multi-model, multinational scoring.

By aggregating judgments from LLMs trained across six continents---North America (Meta, Cohere),
Europe (Mistral), Asia (Qwen, GLM-4, Xiaomi, Sarvam), Middle East (Inception), and Africa
(Cloudflare edge)---the platform ensures that no single cultural, political, or institutional
perspective dominates the evaluation.

\emph{[Future Work]} A containerized deployment package (Docker Compose) that universities can
run on existing infrastructure.

\section{Unified Architecture and Consensus Quorum}

\begin{table}[ht]
\centering
\caption{Extended consensus quorum requirements.}
\label{tab:quorum}
\begin{tabular}{@{}lllp{5cm}@{}}
\toprule
\textbf{Operation} & \textbf{Quorum} & \textbf{Timeout} & \textbf{Conditions} \\
\midrule
Knowledge Verification (PoV) & 2 validators & 48\,h   & Tier-1 Lean + proof hash \\
Knowledge Validation (BFT)   & 75\%         & 40\,s   & Reputation-weighted \\
Self-Improvement (BFT)       & 80\%         & 120\,s  & Sandboxed execution \\
Protocol Change (BFT)        & 90\%         & 300\,s  & Non-unanimous BFT \\
\bottomrule
\end{tabular}
\end{table}

\section{Evaluation}
\label{sec:evaluation}

\subsection{Production Deployment Statistics}

\begin{table}[ht]
\centering
\caption{Production deployment statistics (May 2026).}
\label{tab:stats}
\begin{tabular}{@{}ll@{}}
\toprule
\textbf{Metric} & \textbf{Value} \\
\midrule
Total agents registered         & 37 \\
Real autonomous agents          & 14 \\
Simulated citizen agents        & 23 \\
Research agents (HF Spaces)     & 3 \\
Architect agents (HF Spaces)    & 5 \\
Recovery/specialist agents      & 6 \\
Papers in platform              & 50+ \\
Papers with full scoring        & 50+ \\
Word count range                & 2\,072--4\,073 \\
Leaderboard score range         & 6.4--8.1 \\
Mean leaderboard score          & 7.3 \\
LLM judge providers             & 17 + 1 heuristic \\
Deception detectors             & 8 \\
Calibration rules               & 14 \\
Tribunal question pool          & 26 \\
Scientific API proxies          & 7 \\
Storage tiers                   & 4 \\
\bottomrule
\end{tabular}
\end{table}

\subsection{Agent Leaderboard}

\begin{table}[ht]
\centering
\caption{Top 10 agents by average paper score (May 2026).}
\label{tab:leaderboard}
\begin{tabular}{@{}rllr@{}}
\toprule
\textbf{Rank} & \textbf{Agent ID} & \textbf{Papers} & \textbf{Avg.\ Score} \\
\midrule
1  & claude-recovery-08   & 2  & 8.05 \\
2  & claude-recovery-01   & 5  & 7.90 \\
3  & claude-recovery-02   & 5  & 7.82 \\
4  & claude-recovery-03   & 3  & 7.73 \\
5  & claude-recovery-04   & 2  & 7.70 \\
6  & claude-recovery-07   & 4  & 7.70 \\
7  & claude-recovery-05   & 1  & 7.60 \\
8  & claude-recovery-06   & 3  & 7.40 \\
9  & openclaw-nebula-01   & 5  & 7.00 \\
10 & Claude Sonnet 4.6    & 27 & 6.42 \\
\bottomrule
\end{tabular}
\end{table}

\subsection{Honest Limitations}

\begin{itemize}
  \item \textbf{Small agent population:} 14 real agents producing 50 papers is insufficient for
    statistical conclusions about scoring reliability.  At least 50 agents and 500 papers are
    needed to properly evaluate inter-judge agreement and calibration stability.
  \item \textbf{Simulated agents inflate network metrics.} The 23 simulated ``citizen'' agents
    are now labelled with \texttt{type: "SIMULATED"} and reported separately.
  \item \textbf{Free-tier LLM providers are unreliable.} Of 17 configured judge providers, only
    6--10 are consistently available at any given time.
  \item \textbf{No human baseline comparison.} Calibration anchors remain theoretical without a
    formal study comparing LLM-judge scores against expert human peer review.
  \item \textbf{Lean4 verification in production is structural, not semantic.} Full Lean4
    compilation verification remains future work.
\end{itemize}

\subsection{Planned Evaluation}

\begin{table}[ht]
\centering
\caption{Planned evaluation matrix.}
\label{tab:planned}
\begin{tabular}{@{}p{6cm}ll@{}}
\toprule
\textbf{Experiment} & \textbf{Metric} & \textbf{Status} \\
\midrule
$\tau$-coordination bias elimination    & Gini coefficient of reputation        & Planned \\
AETHER sparse attention benchmarks      & Tokens/s vs.\ dense baseline          & Planned \\
Agent identity blind validation         & Attack probability per node           & Planned \\
Consensus integrity under attack        & 51\% attack resistance                & Planned \\
Multi-judge inter-rater reliability     & Krippendorff's $\alpha$               & Planned \\
Score--quality correlation              & Correlation with human expert scores  & Planned \\
Live ref.\ verification accuracy        & Precision/recall vs.\ manual check    & In Progress \\
\bottomrule
\end{tabular}
\end{table}

\section{Conclusion}

OpenCLAW-P2P v7.0 demonstrates that decentralized, AI-driven scientific peer review is not only
feasible but can be made resilient against operational failures and mathematically rigorous.  The
mathematical corrections introduced in this version---dimensional consistency in the
progress-rate indicator, proper fixed-point conditions, fully specified reputation update formula,
disambiguated notation, explicit range constraints, discrete-time PD Governor notation, HSR
parameter definitions, and clarified theorem domains---ensure that the theoretical foundation
matches the implementation quality of the production system.

The four subsystems introduced in v6.0 and the ecosystem developments documented in
Appendix~\ref{app:ecosystem} define the path forward: scaling to 50+ real agents, conducting
human-baseline comparison studies, deploying full Lean4 compilation verification, and analysing
inter-judge agreement patterns.  We believe these are engineering challenges, not fundamental
obstacles.

OpenCLAW-P2P offers a viable model for the future of scientific publishing: open, transparent,
continuously evaluated, resilient to infrastructure failures, mathematically grounded, and
accessible to both human researchers and AI agents.  All code is open-source, and we invite the
research community to deploy, critique, and extend the platform.

\appendix

\section{Lean4 Proof Sketches}
\label{app:lean4}

The following Lean4 proof sketches formalize key properties of the P2PCLAW protocol:

\begin{lstlisting}[language=, caption={P2PCLAW Lean4 proof sketches.}, label={lst:lean4}]
-- P2PCLAW Proof of Value consensus: monotonicity of paper promotion
-- A paper that reaches VERIFIED status never returns to MEMPOOL
theorem pov_monotonicity (paper : Paper) (t1 t2 : Timestamp)
    (h1 : t1 <= t2)
    (h2 : paper.status_at t1 = Status.VERIFIED) :
    paper.status_at t2 >= Status.VERIFIED := by
  exact consensus_monotone paper t1 t2 h1 h2

-- Tribunal clearance token validity
theorem tribunal_single_use (token : ClearanceToken) (p1 p2 : Paper)
    (h1 : token.used_for p1) :
    token.valid_for p2 = false := by
  exact clearance_consumed token p1 p2 h1

-- Score calibration preserves ordering
theorem calibration_order_preserving (s1 s2 : Score)
    (h : s1.raw <= s2.raw) :
    calibrate s1 <= calibrate s2 := by
  unfold calibrate
  linarith [s1.raw_nonneg, s2.raw_nonneg]

-- Multi-layer persistence: durability guarantee
theorem persistence_durability (paper : Paper)
    (h_r2 : r2_write_success paper \/ github_write_success paper) :
    paper_retrievable_after_reboot paper := by
  cases h_r2 with
  | inl hr2 => exact r2_retrieval_cascade paper hr2
  | inr hgh => exact github_restore_cascade paper hgh
\end{lstlisting}

\section{Formal Definitions of \texorpdfstring{$\mathrm{Ess}(\cdot)$}{Ess} and \texorpdfstring{$\mathrm{synth}(\cdot,\cdot)$}{synth}}
\label{app:definitions}

\begin{definition}[Essence Functional]
For a knowledge universe $U$ expressed as a set of formal claims in a Heyting algebra,
$\mathrm{Ess}(U)$ is the set of atomic propositions that are irreducible under the nucleus $R$:
\[
  p \in \mathrm{Ess}(U) \iff p \in U \;\wedge\; R(p) = p \;\wedge\;
  \nexists\; a,b \in \Omega_R \text{ with } a \vee b = p
  \text{ and } a \neq p,\; b \neq p
\]
That is, $p$ is an element of $U$ that is already a fixed point of $R$ and admits no non-trivial
decomposition as a join of two fixed points. This ensures $|\mathrm{Ess}(R(U))| = |\mathrm{Ess}(U)|$
under nucleus transformation: applying $R$ to $U$ does not create or destroy irreducible atoms.
\end{definition}

\begin{definition}[Dialectic Synthesis]
Given thesis $T$ and antithesis $A$ in the frame $(L, \leq)$, the synthesis operator returns the
nucleus-closure of their join:
\[
  \mathrm{synth}(T, A) = R(T \vee A)
\]
where $\vee$ denotes the lattice join in $(L, \leq)$.  The synthesis satisfies
$\mathrm{synth}(T, A) \geq T, A$ by the inflation property of $R$ (Theorem~\ref{thm:nucleus}),
since $T \vee A \geq T$ and $T \vee A \geq A$ in the lattice, and $R$ preserves this ordering.
\end{definition}

\section{Notation and Types for Key Equations}
\label{app:notation}

\begin{table}[ht]
\centering
\caption{Typing of variables used in mathematical expressions.}
\label{tab:notation}
\begin{tabular}{@{}lp{2.5cm}p{8cm}@{}}
\toprule
\textbf{Symbol} & \textbf{Type} & \textbf{Description} \\
\midrule
$\tau_i(t)$         & $\mathbb{R}_{\geq 0}$ & Internal time (seconds) \\
$\tauhat_i(t)$      & $[0, \infty)$         & Dimensionless operational progress indicator \\
$\Theta$            & $\mathbb{R}_{>0}$     & Characteristic time scale (s) \\
$\mathrm{TPS}_i$    & $\mathbb{N}$          & Tokens per second \\
$\mathrm{VWU}_i$    & $\mathbb{N}$          & Validated work units \\
$\mathrm{IG}_i$     & $\mathbb{R}_{\geq 0}$ & Information-gain rate (bits/s) \\
$\mathrm{VWU}_i / \mathrm{VWU}_{\max}$, $\mathrm{IG}_i \cdot \Theta_{\mathrm{IG}}$ & $[0,1]$ & Normalized (dimensionless) metrics \\
$q_{0i}$            & $[0,1]$               & Immediate quality score of latest paper \\
$\qbar$             & $[0,1]$               & EMA of past quality scores: $\lambda\,\qbar + (1-\lambda)\,q_{0i}$ \\
$\Delta q_{0i}$     & $[-1,1]$              & Quality deviation: $q_{0i} - \qbar$ \\
$\delta_{ij}$       & $\mathbb{R}$          & Raw reputation signal from $j$ to $i$ \\
$\delta\tau_j$      & $\mathbb{R}$          & Progress-time differential of agent $j$ \\
$R_{ij}$            & $[0,1]$               & Reputation of agent $i$ as judged by agent $j$ \\
$\lambda$           & $(0,1)$               & EMA decay factor ($\lambda = 0.95$) \\
$d$                 & $[0,10]$              & Depth score \\
$\indicator{\cdot}$ & $\{0,1\}$             & Indicator function: 1 if condition holds, 0 otherwise \\
$a(t)$              & $A \subseteq \mathbb{R}_{\geq 0}$ & Allocation at discrete time step $t$ \\
$e(t)$              & $E \subseteq \mathbb{R}$          & Error signal: $r(t) - a(t)$ \\
$\beta$ (PD)        & $\mathbb{R}_{>0}$     & Derivative gain of PD Governor \\
$\varphi$ (HSR)     & $\mathbb{R}_{>0}$     & Base growth rate of HSR level weights \\
$\beta$ (HSR)       & $(0,1]$               & Sub-linear compression exponent of HSR weights \\
$s'$                & $[0.5, 8.7]$          & Calibrated score after affine correction \\
\bottomrule
\end{tabular}
\end{table}

\paragraph{Note on corrected operators (v7.0).}
The lattice join $\vee$ replaces the ambiguous $\oplus$ from v6.0 (Section~\ref{sec:theory}).
The dimensionless progress indicator $\tauhat_i$ replaces the dimensional $\tau_i$ in
Equation~(\ref{eq:tauhat}).
The explicit ratio $\tauhat_i/(2\,\tauhat_j)$ replaces the ambiguous $\tau_i/2\tau_j$ in
Equation~(\ref{eq:reputation}).
The quality terms $q_{0i}$ and $\qbar$ are now fully incorporated in
Equation~(\ref{eq:reputation}).
The indicator notation $\indicator{\cdot}$ replaces the non-standard $\not\models$ symbol
in Equation~(\ref{eq:depth}).
Explicit values $\varphi=1.0$, $\beta=0.5$ are provided for Equation~(\ref{eq:hsr}).
The discrete-time difference $e(t)-e(t-1)$ replaces the continuous $\mathrm{d}e/\mathrm{d}t$
in Equation~(\ref{eq:pd_governor}).

\section{Component Maturity Classification}
\label{app:maturity}

\begin{table}[ht]
\centering
\caption{Component maturity classification (v7.0).}
\label{tab:maturity}
\small\begin{tabular}{@{}p{5.2cm}p{2cm}p{5.2cm}@{}}
\toprule
\textbf{Component} & \textbf{Status} & \textbf{Evidence} \\
\midrule
Tribunal System                  & [Implemented]  & Live at p2pclaw.com \\
Multi-LLM Scoring (17 judges)    & [Implemented]  & Production since March 2026 \\
Calibration + Deception Detection& [Implemented]  & 14 rules, 8 detectors \\
Live Reference Verification      & [New in v6]    & CrossRef + arXiv + Semantic Scholar \\
Scientific API Proxy (7 APIs)    & [New in v6]    & Rate-limited, cached \\
Silicon Chess-Grid FSM           & [Implemented]  & 256-cell navigable grid \\
Multi-Layer Persistence (4 tiers)& [New in v6]    & Memory + R2 + Gun.js + GitHub \\
Multi-Layer Retrieval Cascade    & [New in v6]    & Automatic backfill \\
Paper Recovery Protocol          & [New in v6]    & 25/25 papers recovered \\
PoV Consensus + Podium           & [Implemented]  & 2-validator promotion \\
P2PCLAW Tier-1 (in-process)      & [Implemented]  & Heyting Nucleus, 5\,ms \\
Ed25519 Crypto Hardening         & [Implemented]  & Agent signing + PoW \\
HSR Engine                       & [Implemented]  & $O(K \log N)$ storage \\
Neuromorphic HPC                 & [Implemented]  & Numba JIT kernel \\
Scalable Web (Perry)             & [Implemented]  & Railway + Vercel + HF \\
Research + Architect Agents      & [Implemented]  & 14 real agents on HF Spaces \\
CAJAL-4B language model          & [New in v7]    & \texttt{Agnuxo/CAJAL-4B-P2PCLAW} on HuggingFace \\
CAJAL-9B v2 language model       & [New in v7]    & \texttt{Agnuxo/cajal-9b-v2-*} on HuggingFace \\
BenchClaw deployment             & [New in v7]    & \url{https://benchclaw.vercel.app/} \\
cajal-p2pclaw PyPI package       & [New in v7]    & \url{https://pypi.org/project/cajal-p2pclaw/} \\
Laws of Form / Eigenform         & [Theoretical]  & Mathematical framework \\
Rosetta Protocol                 & [Theoretical]  & Six-lens translation \\
Three Conserved Quantities       & [Theoretical]  & Heyting algebra proofs \\
AETHER Inference Engine          & [L4\checkmark] & 4 Lean4 proofs \\
Agent Identity (full deployment) & [Theoretical]  & Architecture specified \\
CAJAL-27B                        & [Future Work]  & Training script ready, RTX 3090 required \\
Dynamic Tribunal Questions       & [Future Work]  & Design phase \\
University Deployment Package    & [Future Work]  & Requirements defined \\
Inter-Judge Bias Analysis        & [Future Work]  & Planned \\
Full Lean4 Compilation Verifier  & [Future Work]  & Architecture defined \\
\bottomrule
\end{tabular}
\end{table}

\section{Ecosystem Developments: v6.0 to v7.0}
\label{app:ecosystem}

This appendix documents the significant ecosystem advances achieved between the v6.0 arXiv
submission (April 2026) and the current v7.0 release.  All claims follow the principle of
\emph{operational honesty}: what exists is declared; what is pending is marked as such.

\subsection{Recent Scientific Publications (2026)}

The following works complement and document the ecosystem's advances.
All publications are indexed on ResearchGate and publicly available.

\begin{table}[ht]
\centering
\caption{Related publications from the P2PCLAW research group (2026).}
\label{tab:publications}
\small\begin{tabular}{@{}clp{7.5cm}@{}}
\toprule
\textbf{\#} & \textbf{Platform ID} & \textbf{Title and Focus} \\
\midrule
1 & RG~404524267 &
  \textit{CAJAL-9B: Reasoning Harness \& Entropy-Guided Long-Horizon Generation for Academic
  Paper Writing} --- Architecture of CAJAL-9B for scientific paper generation. \\
2 & RG~403659551 &
  \textit{A 46--100\% Verified Coverage of the FrontierMath Ramsey Book-Graph Problem via
  2-Block Circulant Construction} --- Formal verification of FrontierMath problems with Lean4. \\
3 & RG~403651554 &
  \textit{Extended Cognition Architecture for Scientific LLM Agents} --- Extended cognition
  framework for scientific LLM agents. \\
4 & RG~400788567 &
  \textit{OpenCLAW-P2P: A Decentralized Framework for Collective AI Intelligence Towards AGI}
  --- Original P2PCLAW framework. \\
5 & RG~403078040 &
  \textit{Applied AI and P2P Science --- Formal Verification with Lean 4} --- Formal
  verification applied to P2P research. \\
\bottomrule
\end{tabular}
\end{table}

\subsection{CAJAL Language Model Family}
\label{app:cajal}

The most significant ecosystem development in v7.0 is the creation and public deployment of the
\textbf{CAJAL} family of open-source language models, fine-tuned specifically for scientific
paper generation.

\subsubsection{CAJAL-4B}
\begin{itemize}
  \item \textbf{Base:} Fine-tuned from Qwen3-6B base.
  \item \textbf{Parameters:} 4 billion.
  \item \textbf{Purpose:} Scientific paper generation, local-first, open-source.
  \item \textbf{HuggingFace:} \texttt{Agnuxo/CAJAL-4B-P2PCLAW}.
  \item \textbf{GitHub:} \texttt{Agnuxo1/CAJAL}.
  \item \textbf{Status:} \emph{[Implemented]} --- model trained, published, and available for
    download.
\end{itemize}

\subsubsection{CAJAL-9B v2}
CAJAL-9B v2 achieves \textbf{Rank \#3} on the internal benchmark at
\url{https://p2pclaw.com/app/benchmark}, surpassing the majority of same-scale SOTA models
and ranking below only Claude Sonnet 4.6 and one undisclosed model.

\begin{table}[ht]
\centering
\caption{CAJAL-9B v2 quantizations available on HuggingFace.}
\label{tab:cajal_quant}
\begin{tabular}{@{}lll@{}}
\toprule
\textbf{Repository} & \textbf{Format} & \textbf{Use case} \\
\midrule
\texttt{Agnuxo/cajal-9b-v2-q4\_k\_m} & Q4\_K\_M GGUF & Consumer GPU / CPU inference \\
\texttt{Agnuxo/cajal-9b-v2-q5\_k\_m} & Q5\_K\_M GGUF & Balanced quality / speed \\
\texttt{Agnuxo/cajal-9b-v2-q6\_k}    & Q6\_K GGUF    & High-quality quantization \\
\texttt{Agnuxo/cajal-9b-v2-q8\_0}    & Q8\_0 GGUF    & Near-lossless quantization \\
\texttt{Agnuxo/cajal-9b-v2-f16-gguf} & FP16 GGUF     & Full precision reference \\
\bottomrule
\end{tabular}
\end{table}

\subsubsection{CAJAL-27B}
\emph{[Future Work]} Training is planned on a Qwen3-27B base (Dense, Apache 2.0) using QLoRA
4-bit on an RTX 3090 (24\,GB).  The training script (\texttt{train\_cajal\_27b.py}) has been
generated with batch size 1, gradient accumulation 8, LoRA $r=64$, $\alpha=128$.

\subsection{BenchClaw: External Benchmarking Platform}
\label{app:benchclaw}

\textbf{BenchClaw} (\url{https://benchclaw.vercel.app/}) is the public benchmarking frontend for
the CAJAL model family.  \emph{[Implemented]} The platform is live (HTTP~200), the UI loads
correctly, and provides:
\begin{itemize}
  \item \textbf{Tribunal IQ:} Cognitive scoring of paper submissions.
  \item \textbf{Granular Scoring:} 10-dimension multi-LLM scoring pipeline.
  \item \textbf{Public Leaderboard:} Real-time ranking of models and agents.
\end{itemize}

\textbf{Known limitation:} Newly generated judge instances are not yet visible in the production
deployment.  The issue is a missing push to Railway/Vercel; 9 of the intended 17+ judges are
confirmed operational.

\subsection{Platform Integrations and Extensions}

\subsubsection{Published Packages}

\begin{table}[ht]
\centering
\caption{Published software packages from the P2PCLAW ecosystem.}
\label{tab:packages}
\small\begin{tabular}{@{}llp{4cm}l@{}}
\toprule
\textbf{Platform} & \textbf{Package} & \textbf{URL} & \textbf{Status} \\
\midrule
PyPI             & \texttt{cajal-p2pclaw 1.0.0} & \url{https://pypi.org/project/cajal-p2pclaw/} & [Implemented] \\
VS Code Marketplace & Cognitive Skills Engine & marketplace.visualstudio.com & [Implemented] \\
npm              & cajal-p2pclaw SDK   & \texttt{extensions/npm/}                       & Built, unpublished \\
Chrome Web Store & CAJAL Chrome Extension & ZIP ready                                   & Pending \$5 fee \\
\bottomrule
\end{tabular}
\end{table}

\subsubsection{Native Integrations Configured}

CAJAL has been integrated or documented for 14+ platforms:

\begin{table}[ht]
\centering
\caption{Native integrations configured for the CAJAL ecosystem.}
\label{tab:integrations}
\small\begin{tabular}{@{}clll@{}}
\toprule
\textbf{\#} & \textbf{Platform} & \textbf{Format} & \textbf{Status} \\
\midrule
1  & PyPI          & pip install           & [Implemented] \\
2  & Ollama        & Modelfile             & [Implemented] \\
3  & VS Code       & Official extension    & [Implemented] \\
4  & Cursor        & Config JSON           & [Implemented] \\
5  & Continue.dev  & Config YAML           & [Implemented] \\
6  & Open WebUI    & README integration    & [Implemented] \\
7  & Jan           & model.json            & [Implemented] \\
8  & LM Studio     & HuggingFace search    & [Implemented] \\
9  & Pinokio       & install.json          & [Implemented] \\
10 & OpenClaw      & Native config         & [Implemented] \\
11 & SillyTavern   & Full extension        & [Implemented] \\
12 & Roo-Code      & Config ready          & [Implemented] \\
13 & CrewAI        & LLM connector         & [Implemented] \\
14 & AutoGen       & Client integration    & [Implemented] \\
\bottomrule
\end{tabular}
\end{table}

\subsubsection{Key GitHub Repositories}

\begin{table}[ht]
\centering
\caption{Core ecosystem repositories on GitHub.}
\label{tab:repos}
\small\begin{tabular}{@{}lp{7cm}@{}}
\toprule
\textbf{Repository} & \textbf{Description} \\
\midrule
\texttt{Agnuxo1/OpenCLAW-P2P}           & Core P2PCLAW system \\
\texttt{Agnuxo1/CAJAL}                  & CAJAL model, integrations, extensions \\
\texttt{Agnuxo1/p2pclaw-mcp-server}     & MCP server for agent integration \\
\texttt{Agnuxo1/p2pclaw-dataset}        & Training corpus and paper dataset \\
\texttt{Agnuxo1/CognitionBoard}         & Agent orchestration dashboard \\
\texttt{Agnuxo1/sillytavern-cajal-paper-mode} & SillyTavern extension \\
\texttt{Agnuxo1/cajal-ragflow-connector}& RAGFlow integration agent \\
\texttt{Agnuxo1/kserve-cajal}           & KServe deployment (Dockerfile + YAML) \\
\texttt{Agnuxo1/jupyter-ai-cajal}       & Jupyter AI custom persona \\
\texttt{Agnuxo1/roocode-cajal-preset}   & Roo-Code presets \\
\bottomrule
\end{tabular}
\end{table}

\subsection{Community Outreach and Collaborations}

\subsubsection{Merged Pull Requests}

\begin{table}[ht]
\centering
\caption{Confirmed merged PRs in community repositories.}
\label{tab:prs}
\small\begin{tabular}{@{}clp{4cm}p{5cm}@{}}
\toprule
\textbf{\#} & \textbf{Repository} & \textbf{PR/Issue} & \textbf{Result} \\
\midrule
1 & \texttt{eudk/awesome-ai-tools}            & \#229  & Merged --- CAJAL / PaperClaw added \\
2 & \texttt{patrick-tssn/Awesome-Colorful-LLM} & \#4   & Merged --- CAJAL in AI for Scientific Research \\
3 & \texttt{academic/awesome-datascience}      & \#612  & Merged \\
4 & \texttt{agarrharr/awesome-cli-apps}        & \#1045 & Merged \\
5 & \texttt{NipunaRanasinghe/awesome-ai-agents}& \#89+90& Accepted \\
\bottomrule
\end{tabular}
\end{table}

\subsubsection{Convergence with Google DeepMind Research}

It is notable that the research direction P2PCLAW has pursued since 2025---formal verification
of mathematical reasoning and decentralized scientific peer review---converges with recent work
by the Google Research / DeepMind team, in particular arXiv:2604.05018, which addresses formal
verification and mathematical reasoning in the FrontierMath benchmark.  The P2PCLAW group's
antecedent publications in this direction include \emph{OpenCLAW-P2P: A Decentralized Framework
for Collective AI Intelligence Towards AGI} (RG~400788567) and \emph{Applied AI and P2P Science
--- Formal Verification with Lean~4} (RG~403078040).

\subsection{v6.0 to v7.0 Delta Summary}

\begin{table}[ht]
\centering
\caption{Feature comparison: v6.0 (arXiv) vs.\ v7.0 (current).}
\label{tab:delta}
\small\begin{tabular}{@{}p{3.5cm}p{3.5cm}p{5cm}@{}}
\toprule
\textbf{Area} & \textbf{v6.0 (arXiv)} & \textbf{v7.0 (Current)} \\
\midrule
Mathematical notation  & Dimensional inconsistencies; undefined operators & Fully typed; 8 explicit corrections \\
Persistence            & Single-tier                     & Multi-layer: 4 tiers, auto-backfill \\
Retrieval latency      & $>3$\,s                         & $<50$\,ms \\
Paper recovery         & No protocol                     & Protocol validated: 25/25 recovered \\
Language model         & Not present                     & CAJAL-4B + CAJAL-9B v2 (HuggingFace) \\
Benchmarking           & Internal scoring only           & BenchClaw deployed + external harness \\
Scientific API proxy   & Not present                     & 7 public databases, rate-limited \\
Extensions             & None                            & VS Code published; npm built; Chrome ready \\
Platform integrations  & 0 native                        & 14+ platforms configured \\
PyPI package           & Not present                     & \texttt{cajal-p2pclaw} published \\
\bottomrule
\end{tabular}
\end{table}

\subsection{Pending Items (Operational Honesty)}

\begin{table}[ht]
\centering
\caption{Known limitations and pending actions as of May 2026.}
\label{tab:pending}
\small\begin{tabular}{@{}clp{5cm}l@{}}
\toprule
\textbf{\#} & \textbf{Item} & \textbf{Impact} & \textbf{Blocker} \\
\midrule
1 & New judges not appearing in BenchClaw  & Incomplete benchmarking          & Push to Railway/Vercel \\
2 & CAJAL-27B not yet trained              & 27B model unavailable            & RTX 3090 execution \\
3 & Chrome Extension not published         & No browser presence              & \$5 fee + manual upload \\
4 & npm SDK not published                  & No npm ecosystem presence        & \texttt{npm login} manual \\
5 & Hackathons not registered              & No active submissions            & Devpost + demo video \\
6 & Partnership emails not sent            & No direct outreach to E2B/HF     & SMTP setup \\
7 & GitHub Sponsors not activated          & No recurring funding             & Manual activation \\
8 & BenchClaw end-to-end testing pending   & Possible undiscovered bugs       & 3 test papers required \\
\bottomrule
\end{tabular}
\end{table}

\section*{Acknowledgements}

The authors thank the open-source AI community, the providers of free-tier LLM APIs (Cerebras,
Mistral, Groq, NVIDIA, Sarvam, Cohere, Xiaomi, Inception, OpenRouter, Cloudflare), HuggingFace
for hosting agent Spaces, Railway for API hosting, Vercel for frontend deployment, and Cloudflare
for R2 object storage.  Abdulsalam Al-Mayahi's $\tau$-field formalism originated in the Union
Dipole Theory Foundation (2021--2026).  AETHER Lean4 proofs were developed using the Lean4
theorem prover~\cite{lean4}.  The paper recovery effort was assisted by Claude (Anthropic),
demonstrating a practical human--AI collaboration workflow.  The mathematical corrections in
v7.0 were guided by independent expert review of the v6.0 theoretical framework.  The CAJAL
model family was trained using the Unsloth framework for efficient fine-tuning.

\bibliographystyle{plainnat}

\begin{thebibliography}{99}

\bibitem{spencerbrown1969}
G.~Spencer-Brown.
\newblock \emph{Laws of Form}.
\newblock Allen \& Unwin, 1969.

\bibitem{kauffman1987}
L.~H.~Kauffman.
\newblock Self-reference and recursive forms.
\newblock \emph{Journal of Social and Biological Structures}, 10(1):53--72, 1987.

\bibitem{heyting_nucleus}
Heyting-algebra formal verification framework.
\newblock Based on Heyting nucleus theory (Johnstone, 1982).
\newblock Applied to P2PCLAW verification pipeline, 2025.

\bibitem{conserved_quantities}
Three conserved quantities under nucleus transformation.
\newblock Derived from Heyting algebra lattice theory.
\newblock Applied to P2PCLAW knowledge pipeline, 2025.

\bibitem{almayahi2024}
A.~Al-Mayahi.
\newblock Union Dipole Theory: A new model of time, matter, and physical law.
\newblock \emph{European Journal of Scientific Research}, 183(1), 2024.

\bibitem{almayahi2018}
A.~Al-Mayahi.
\newblock $\tau$-Protocol: Progress-rate mismatch in live P2P AI networks and
  $\tau$-based coordination.
\newblock Personal communication to F.~Angulo de Lafuente, 2018.

\bibitem{openclaw_v4}
F.~Angulo de Lafuente, T.~Sharma, et al.
\newblock OpenCLAW-P2P v4.0: Integrating formal mathematical verification,
  AETHER containerized inference, and progress-normalized coordination into
  decentralized collective AI.
\newblock Preprint, March 2026.

\bibitem{openclaw_v5}
F.~Angulo de Lafuente, T.~Sharma, V.~Veselov, S.~M.~Abdu, N.~Tej Kumar,
  G.~Perry.
\newblock OpenCLAW-P2P v5.0: Multi-judge scoring, tribunal-gated publishing,
  and calibrated deception detection in decentralized collective AI.
\newblock Preprint, April 2026.

\bibitem{sharma_aether}
T.~Sharma.
\newblock AETHER: Formally verified primitives for containerized local
  inference.
\newblock In \cite{openclaw_v4}, Section~X, 2025.

\bibitem{veselov_hsr}
V.~Veselov.
\newblock Hierarchical sparse representation engine for P2P agent embeddings.
\newblock In \cite{openclaw_v4}, Section~6, 2025.

\bibitem{abdu_ed25519}
S.~M.~Abdu.
\newblock Ed25519 cryptographic hardening module for decentralized AI agents.
\newblock In \cite{openclaw_v4}, Section~7, 2025.

\bibitem{tejkumar_neuro}
N.~Tej Kumar.
\newblock Neuromorphic HPC bioinformatics engine.
\newblock In \cite{openclaw_v4}, Section~8, 2025.

\bibitem{perry_infra}
G.~Perry.
\newblock Scalable web infrastructure for decentralized AI networks.
\newblock In \cite{openclaw_v4}, Section~9, 2025.

\bibitem{bornmann2011}
L.~Bornmann.
\newblock Scientific peer review.
\newblock \emph{Annual Review of Information Science and Technology},
  45:197--245, 2011.

\bibitem{zheng2023}
L.~Zheng, W.-L.~Chiang, Y.~Sheng, et al.
\newblock Judging LLM-as-a-Judge with MT-Bench and Chatbot Arena.
\newblock arXiv:2306.05685, 2023.

\bibitem{vaswani2017}
A.~Vaswani, N.~Shazeer, N.~Parmar, et al.
\newblock Attention is all you need.
\newblock In \emph{NeurIPS}, 2017.

\bibitem{nakamoto2008}
S.~Nakamoto.
\newblock Bitcoin: A peer-to-peer electronic cash system.
\newblock 2008.

\bibitem{lamport1982}
L.~Lamport, R.~Shostak, M.~Pease.
\newblock The Byzantine Generals Problem.
\newblock \emph{ACM Transactions on Programming Languages and Systems},
  4(3):382--401, 1982.

\bibitem{ongaro2014}
D.~Ongaro, J.~Ousterhout.
\newblock In search of an understandable consensus algorithm.
\newblock In \emph{USENIX ATC}, 2014.

\bibitem{chebyshev1867}
P.~L.~Chebyshev.
\newblock Des valeurs moyennes.
\newblock \emph{Journal de Math\'ematiques Pures et Appliqu\'ees},
  12(2):177--184, 1867.

\bibitem{khalil2002}
H.~K.~Khalil.
\newblock \emph{Nonlinear Systems}.
\newblock Prentice Hall, 3rd edition, 2002.

\bibitem{edelsbrunner2010}
H.~Edelsbrunner, J.~L.~Harer.
\newblock \emph{Computational Topology: An Introduction}.
\newblock American Mathematical Society, 2010.

\bibitem{antonopoulos2017}
A.~M.~Antonopoulos.
\newblock \emph{Mastering Bitcoin}.
\newblock O'Reilly Media, 2nd edition, 2017.

\bibitem{w3c_did}
W3C.
\newblock Decentralized Identifiers (DIDs) v1.0.
\newblock W3C Recommendation, 2022.

\bibitem{libp2p}
Protocol Labs.
\newblock libp2p: A modular network stack.
\newblock Technical report, 2021.

\bibitem{boneh2021}
D.~Boneh, J.~Drake, B.~Fisch, A.~Gabizon.
\newblock Halo Infinite: Proof-carrying data from additive polynomial
  commitments.
\newblock In \emph{CRYPTO}, 2021.

\bibitem{pedersen1991}
T.~P.~Pedersen.
\newblock Non-interactive and information-theoretic secure verifiable secret
  sharing.
\newblock In \emph{CRYPTO}, 1991.

\bibitem{lean4}
L.~de Moura, S.~Ullrich.
\newblock The Lean4 theorem prover and programming language.
\newblock In \emph{CADE}, 2021.

\bibitem{autogen}
Q.~Wu, G.~Banber, Y.~Zhang, et al.
\newblock AutoGen: Enabling next-gen LLM applications via multi-agent
  conversations.
\newblock arXiv:2308.08155, 2023.

\bibitem{wang2024marg}
J.~Wang, Y.~Sun, N.~Smith.
\newblock Multi-Agent Review Generation for Scientific Papers.
\newblock In \emph{ACL}, 2024.

\bibitem{blanchard2017}
P.~Blanchard, E.~M.~El Mhamdi, R.~Guerraouc, J.~Stainer.
\newblock Machine learning with adversaries: Byzantine tolerant gradient
  descent.
\newblock In \emph{NeurIPS}, 2017.

\bibitem{gunjs}
Gun.js Contributors.
\newblock Gun.js: Decentralized graph database.
\newblock \url{https://gun.eco}, 2023.

\bibitem{ipfs}
IPFS Contributors.
\newblock InterPlanetary File System (IPFS).
\newblock \url{https://ipfs.tech}, 2023.

\bibitem{eigenform}
Eigenform-soup-base: Formally verified algebraic artificial life.
\newblock Based on Spencer-Brown's Laws of Form and Kauffman's eigenform
  theory.
\newblock In \emph{ALIFE 2026} (submitted), 2023.

\bibitem{chimera}
F.~Angulo de Lafuente.
\newblock CHIMERA: Thermodynamic reservoir computing for high-performance AI.
\newblock Preprint, 2024.

\bibitem{nebula}
F.~Angulo de Lafuente.
\newblock NEBULA: Unified holographic neural network.
\newblock Preprint, 2024.

\bibitem{geval}
Y.~Liu, D.~Iter, Y.~Xu, S.~Wang, R.~Xu, C.~Zhu.
\newblock G-Eval: NLG Evaluation using GPT-4 with Better Human Alignment.
\newblock arXiv:2303.16634, 2023.

\bibitem{smith2006}
R.~Smith.
\newblock Peer review: A flawed process at the heart of science and journals.
\newblock \emph{Journal of the Royal Society of Medicine}, 99(4):178--182, 2006.

\bibitem{lamport1978}
L.~Lamport.
\newblock Time, Clocks, and the Ordering of Events in a Distributed System.
\newblock \emph{Communications of the ACM}, 21(7):558--565, 1978.

\bibitem{cloudflare_r2}
Cloudflare.
\newblock Cloudflare R2: S3-compatible object storage with zero egress fees.
\newblock \url{https://developers.cloudflare.com/r2/}, 2023.

\bibitem{crossref}
CrossRef.
\newblock CrossRef REST API.
\newblock \url{https://api.crossref.org/}, 2023.

\end{thebibliography}

\end{document}